\documentclass[letterpaper,conference]{IEEEtran}
\IEEEoverridecommandlockouts

\usepackage{cite}
\let\citet\cite
\let\citep\cite

\usepackage{amsmath,amssymb,amsfonts}
\usepackage{graphicx}

\usepackage{amsmath, amsfonts, amssymb, amsthm, amscd, mathtools}
\usepackage[T1]{fontenc}
\usepackage{subcaption}

\newtheorem{thm}{Theorem}[section]

\newtheorem{lem}[thm]{Lemma}
\newtheorem{cor}[thm]{Corollary}
\newtheorem{remark}{Remark}[section]
\newtheorem{assump}{Assumption}[section]
\newtheorem{example}{Example}[section]

\def\RR{\mathbb{R}}

\def\PP{\mathbb{P}}

\def\dKL{d_{KL}}
\def\diag{\textsf{diag}}
\def\on{\mathbf{1}}
\def\EE{\mathbb{E}}
\def\YY{\mathcal{Y}}
\def\XX{\mathcal{X}}
\def\LL{\mathcal{L}}
\def\BB{\mathcal{B}}
\def\qbest{q^{\text{best}}_{n, m}}
\def\qbestn{q^{\text{best}}_{n, 1}}
\def\bestofn{best-of-$n$}
\def\Qone{(\textit{Q1})}
\def\Qtwo{(\textit{Q2})}

\DeclareMathOperator*{\argmax}{argmax}

\let\subsubsubsection\subsubsection

\begin{document}

\title{The Asymptotics of Language Model Alignment with Memory}

\author{
\IEEEauthorblockN{
Haricharan Balasundaram$^*$}
\IEEEauthorblockA{
School of Electrical and Computer Engineering, \\
Georgia Institute of Technology, Atlanta, GA\\
Email: haricharanb@gatech.edu
}
\and
\IEEEauthorblockN{V. Arvind Rameshwar}
\IEEEauthorblockA{
Department of Electrical Engineering,
\\
IIT Madras, Chennai, India\\
Email: arvind@ee.iitm.ac.in
}
\thanks{$^*$This work was carried out in part when the author was a student at IIT Madras.}
}

\maketitle

\begin{abstract}
Language model (LM) alignment broadly aims to perturb a given LM $Q$ into an \textit{aligned} LM $q$ such that i) the outputs produced by $q$ and $Q$ are `close' in probability, ii) $q$ has a higher expected \textit{reward} than $Q$. Two common techniques for LM alignment are: KL-constrained RL, which requires knowledge of the LM distribution and is computationally expensive, and the \bestofn{} algorithm, which requires only sampling from the LM. The work of \cite{asymptotics_of_LM_alignment} established asymptotic closeness between the distributions produced by the two alignment methods for an $m$--length \textit{i.i.d.} token sequence output by the LM, in the limit as $m$ increases to infinity. However, the i.i.d. assumption is not representative of practical LMs, whose output sequences often have memory. In this paper, we extend the asymptotic closeness result to the case when the $m$--length token sequence outputted by the LM is \textit{Markovian}. Further, for finite-length output sequences -- particularly, when $m=1$ -- we provide a complete characterization of LM distributions and reward functions for which the KL-divergence between the distributions produced by the two alignment methods is zero -- a question first posed in~\cite{asymptotics_of_LM_alignment}.
\end{abstract}

\section{Introduction}
\label{sec:introduction}

Large Language Models (LLMs) have become a central paradigm for generating natural language text and LLM-based systems are increasingly being adopted in practice \citep{llm_reference_2,llm_reference_1,llm_reference_3}. Given a prompt $x$, an autoregressive LLM defines a conditional probability distribution $\PP(\cdot|x)$ over output token sequences, with $\PP(s|x)$ denoting the probability assigned to the response $s$. LLMs are pretrained on large text corpora and may subsequently be adapted through task-specific fine-tuning or preference-based post-training \citep{llm_alignment_need_1,alignment_4}. 

LLMs sometimes generate unrealistic and undesirable outputs. For instance, the outputs might contain factually incorrect information or reflect biases in the training data \citep{llm_alignment_need_1,llm_alignment_need_2}. Sometimes, more specific properties of the generated output text, including fluency, linguistic style, and the avoidance of undesirable repetition, are targeted to be improved~\citep{ficler2017style,dathathri2020pplm,holtzman2020degeneration,welleck2020unlikelihood}. Desirable outputs are associated with a large `reward' and undesirable outputs are associated with a small `reward'. It is thus desirable to perturb the LLM probability distribution to an \textit{aligned} distribution that has a higher expected reward. 

While several techniques for LLM alignment~\citep{alignment_1,alignment_2,alignment_3,alignment_4,alignment_5,best_of_n_policy,variational_best_of_n} exist in the literature, two of the most popular techniques, which are also grounded in theory, are i) KL-constrained reinforcement learning (RL), and ii) \bestofn{} alignment. KL-constrained RL returns an aligned distribution (which we sometimes call the ``KL-constrained RL distribution'') whose reward is maximized while ensuring that the KL-divergence between the reference LLM distribution, i.e., the LLM distribution \textit{before} alignment, and the aligned LLM distribution is small. In \bestofn{} alignment, the algorithm instead samples $n$ times from the reference LLM distribution, picking the maximum-reward output among $n$ outputs generated. Unlike KL-constrained RL that returns a deterministic, analytical solution, the \bestofn{} algorithm returns a random response, the distribution of which we simply call the ``\bestofn{} distribution''.

In the light of analytical results on the \bestofn{} algorithm, the work of~\citet{asymptotics_of_LM_alignment} establishes a connection between the KL-constrained RL and the \bestofn{} distributions in the restricted setting where: i) the LLM generates sequences consisting of $m$ i.i.d.~tokens as the response for each prompt, and ii) the reward of the length-$m$ sequence can be decomposed as the sum of the reward over individual tokens. Under these assumptions, the KL-constrained RL and the \bestofn{} distributions, when $n = \Theta(\exp(m))$, are asymptotically close as $m$ grows to infinity. In other words, the optimal KL-constrained RL policy, which requires \textit{knowledge of the reference LLM distribution}, is asymptotically close to the computationally efficient \bestofn{} policy, which requires no such knowledge.

While the purpose of the two assumptions is to derive analytical connections between the two alignment methods for a simple, toy setting, the assumptions are not representative of the outputs of practical LLMs, which display a Markov dependence of responses conditioned on previous contexts~\citep{llm_markov_chain}. An interesting question to ask, in this light, is: \textit{Is the connection between the two alignment methods a consequence of the i.i.d. nature of the tokens, or does it hold even in the presence of memory among outputs?} A related question about removing i.i.d. LLM output assumption in~\cite{asymptotics_of_LM_alignment} was posed in the same paper. We take up this question in our work, analyzing the setting of Markov LLM output distributions.

Furthermore, the work of~\citet[Section 5]{asymptotics_of_LM_alignment} also observed empirically that when the LLM distribution and the reward function are both randomly generated, the KL-divergence between the distributions produced by the two alignment methods is ``small'' even for ``small'' $m$ values, when the LLM outputs are i.i.d. A second question to be asked in this regard, also posed in~\citet{asymptotics_of_LM_alignment} is: \textit{For small $m$ values, what LLM distributions and reward functions give rise to small KL-divergence between the distributions returned by the two alignment methods? In particular, what is the dependence of this KL-divergence on the vocabulary size?} We spend the latter part of this paper providing partial progress on this question, when $m = 1$.

\noindent \textbf{Our Contributions.} Our main contributions are hence an affirmative answer to the first question in the presence of memory, i.e., when the LLM outputs are Markov and when the reward function can be decomposed as the sum of local rewards of contexts, there is indeed an asymptotic closeness result between the KL-constrained RL and \bestofn{} distributions. We mention that such local reward criteria arise naturally in controlled decoding and token-level objectives, including attribute control and repetition penalties \citep{dathathri2020pplm,krause2021gedi,welleck2020unlikelihood}. When $m = 1$, we also identify classes of LLM reference distributions and reward functions for which the distributions produced by the two alignment methods are close in KL--divergence. In addition, for certain special classes, we characterize how the KL--divergence between the two alignment methods depends on the vocabulary size, thereby making progress on the second question as well.

Due to the paucity of space, some proofs are relegated to the appendices.

\section{Problem Formulation and Our Results}
\label{sec:notation_and_problem_formulation}

\subsection{Notation and Assumptions}
\label{sec:notation_and_assumptions}

Let $x \in \XX$ be an input (prompt) to a language model (LM). Suppose that the vocabulary (set of possible output tokens of the language model) is $V$ and that the LM is restricted to output a sequence of $m + 1$ tokens. Conditioned on the input $x \in \XX$, a language model (denoted by $M$) assigns a probability for each response, i.e., for each sequence $z^{m + 1} = (z_0 , z_1, \dots, z_m)$ of length $m+1$. In other words, an LM is characterized by the probability distribution
\begin{equation}
    \PP_M(z^{m+1} | x), \qquad z^{m+1} \in V^{m + 1},\ x \in \XX.
\end{equation}

Both classical $k$--gram LMs~\citep{chen_goodman} and modern-day LLMs generate the next token probabilistically depending on a context consisting of the previous $k - 1$ tokens, in which case the probability of the given sequence $z^{m+1}$ conditioned on the input $x$ can be decomposed as follows: 
\begin{equation}
    \PP_M(z^{m+1} | x) = \prod_{i = 0}^{m} \PP_M(z_{i} | z_{i - k + 1}, \dots, z_{i - 1}, x),
    \label{eqn:Markov_reward}
\end{equation}
where the sequence is appropriately padded with start tokens to ensure that $z_{-1}, z_{-2}, \dots$ are well-defined. 

Note that~\eqref{eqn:Markov_reward} essentially establishes the stationary Markov nature of LMs. Formally, for a \textit{fixed} prompt $x$, we absorb $x$ into $y_{i - 1}:= (z_{i - k + 1}, \dots, z_{i - 1}, x) \in \YY$, where $\YY = \XX \times V^{k - 1}$, to obtain 
\begin{equation}
    \PP_M(z^{m+1} | x) = \PP_M(y^{m+1}) = \PP(y_0) \prod_{i = 1}^{m} \PP(y_{i} | y_{i - 1}),
\end{equation}
where $y_{0}$ is defined to be a start token and we define $Q(y^{m+1}):= \PP_M (y^{m+1})$ since the model $M$ is fixed. This procedure is formally stated in Assumption~\ref{assump:Markovness}.

Let $\Delta_a$ denote the probability simplex on a size--$a$ alphabet. 
\begin{assump}
\label{assump:Markovness}
We assume that the LM outputs a sequence $y^{m+1} = (y_0, y_1, \dots, y_m)\in \YY^{m+1}$ according to a Markov process with an irreducible aperiodic transition matrix $P_{|\YY| \times |\YY|}$, with stationary distribution $\pi \in \Delta_{|\YY|}$. We also assume that the first element of the sequence, $y_0$, is drawn from $\pi$. Thus, the probability distribution of the sequence $y^{m+1}$ is obtained as:
\begin{equation}
    \PP_M(y^{m+1}) = Q(y^{m+1}) = \pi_{y_0} \prod_{i = 1}^{m} P_{y_{i - 1} y_{i}}.
    \label{eqn:Q_definition}
\end{equation}    
\end{assump}

For each sequence outputted by the LM for a given input (or prompt) $x$, we assume there exists a positive-valued reward function $r: (\XX \times V^{m+1}) \to \mathbb{R}_+$. Equivalently, we write the reward as $r: \YY^{m+1} \to \mathbb{R}_+$.

To analyze the behavior of the distribution which maximizes the reward, we utilize Assumption~\ref{assump:reward_decomposition}, corresponding to the local decomposition of the reward function discussed earlier.
\begin{assump}
\label{assump:reward_decomposition}
The reward function decomposes as the sum of the rewards of adjacent terms, i.e. 
\begin{equation}
    r (y^{m+1}) = \sum_{i = 1}^{m} r (y_{i - 1}, y_{i}) = \sum_{i = 1}^{m} r_{y_{i - 1} y_{i}},
\end{equation}
where $r_{ab}$ is a shorthand for $r(a, b): \YY \times \YY \to \mathbb{R}_+$.
\end{assump}

\subsection{Problem Formulation}
\label{sec:problem_formulation}

\textbf{KL-constrained RL:} We attempt to obtain an aligned LM which is a probability distribution $q$, such that i) the distance, measured by the KL-divergence (as in \citet{KL_constrained_RL}) $d_{KL}(q,Q)$, between the probability distributions $q$ and $Q$ is at most $m\delta$ for some $\delta>0$, and ii) the expected reward of $q$ taken over $m+1$ samples drawn from $q$, is \textit{maximized}. Note that this expected reward is:
\begin{equation}
    \EE_{Y^{m+1} \sim q}[r(Y^{m+1})] = \sum_{y^{m+1} \in \YY^{m+1}} \hspace{-0.3cm} q(y^{m+1}) r(y^{m+1}).
\label{eqn:expected_reward_expression}
\end{equation}
We require that $q$ be a valid probability distribution, i.e., $q \in \Delta_{|\YY|^{m+1}}$. Thus, the KL-constrained RL can be formally stated as finding
\begin{multline}
q_{\beta} = \argmax_{q \in \Delta_{|\YY|^{m+1}}} \sum_{y^{m+1} \in \YY^{m+1}} q(y^{m+1}) r(y^{m+1}),\\\text{ subject to } \dKL(q, Q) \le m \delta. \label{eqn:optimization_problem} \tag{KL-RL}
\end{multline}

\textbf{Best-of-$n$:} Let $Y^{m + 1, i} \sim Q$ be a length-$(m+1)$ sequence output by the LM according to the distribution $Q$ for some fixed $i$. Now, consider a total of $n$ such length-$(m+1)$ sequences from $Q$, which we call $Y^{m+1, 1}, Y^{m+1, 2}, \dots, Y^{m+1, n}$. The \bestofn{} algorithm sets
\begin{equation}
    K^* = \argmax_{i \in [n]}\ r(Y^{m+1, i}) \label{eqn:best-of-n_definition}
    \tag{Bo-$n$}
\end{equation}
and returns $Y_{K^*}$. Essentially, the algorithm outputs the highest-reward (`best') sequence from $n$ LM outputs. We let the probability distribution of this random output $Y_{K^*}$ be denoted as $\qbest$.

The two questions in Section~\ref{sec:introduction} can be formalized as:\subsubsubsection{\Qone{}} When the LM outputs are drawn from a Markov chain and the rewards decompose locally (Assumptions~\ref{assump:Markovness} and~\ref{assump:reward_decomposition}), do we still have $\lim_{m \to \infty}\frac 1m \dKL(\qbest, q_\beta) = 0\ ?$
\subsubsubsection{\Qtwo{}} Assume that $m = 1$. For what choice of the LM distribution $Q$ and reward function $r$ is it the case that $\dKL(\qbest, q_\beta)$ either equals or is close to zero?

\subsection{Our contributions towards \Qone{}}
\label{sec:results_Q1}

We first prove the theorem below, which connects the KL-constrained RL solution with the exponentially \textit{tilted} distribution studied in \citet{salamatian_tilted_distribution}. We define the matrix $G_\beta$ of the same dimensions as $P$ to be such that 
\begin{equation}
    (G_\beta)_{ij} = P_{ij} \cdot \exp(\beta r_{ij})
\label{eqn:G_beta_definition}
\end{equation}
\begin{thm}
\label{thm:optimization_problem_q_solution}
Under Assumptions~\ref{assump:Markovness}~and~\ref{assump:reward_decomposition}, the solution to~\eqref{eqn:optimization_problem} is    
\begin{equation*}
    q_{\beta}(y^{m + 1}) = \frac{\displaystyle \pi_{y_0} \prod_{i=1}^m {(G_\beta)}_{y_{i - 1} y_i}}{\displaystyle \sum_{\substack{z^{m + 1} \\\in \YY^{m+1}}} \pi_{z_0} \prod_{i=1}^m {(G_\beta)}_{z_{i - 1} z_i}},\ y^{m + 1} \in \YY^{m+1},
\end{equation*}
where $\beta \ge 0$ is chosen such that 
\begin{equation}
    \dKL(q_{\beta}, Q) = m \delta.\label{eqn:beta_value_choice}
\end{equation}
\label{thm:q_expression}
\end{thm}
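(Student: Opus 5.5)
The plan is to treat \eqref{eqn:optimization_problem} as a linear objective maximized over a convex set, solve it by Lagrangian duality, and then specialize the resulting Gibbs form using Assumptions~\ref{assump:Markovness} and~\ref{assump:reward_decomposition}.

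\textbf{Step 1 (Gibbs variational principle).} For any $\beta \ge 0$, define
\[
\tilde q_\beta(y^{m+1}) = \frac{Q(y^{m+1}) e^{\beta r(y^{m+1})}}{Z_\beta},
\qquad
Z_\beta = \sum_{z^{m+1}} Q(z^{m+1}) e^{\beta r(z^{m+1})}.
\]
For every $q \in \Delta_{|\YY|^{m+1}}$ we have the identity
\[
\beta\,\EE_q[r] - \dKL(q,Q) = \log Z_\beta - \dKL(q,\tilde q_\beta).
\]
This is verified by expanding $\log(q/\tilde q_\beta)$. Hence $\tilde q_\beta$ is the unique maximizer of $\beta\,\EE_q[r] - \dKL(q,Q)$.

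\textbf{Step 2 (optimality for the constrained problem).} Fix $\beta$ with $\dKL(\tilde q_\beta, Q) = m\delta$. Take any feasible $q$, i.e. one with $\dKL(q,Q) \le m\delta$. Step~1 gives
\[
\beta\,\EE_q[r] \le \beta\,\EE_{\tilde q_\beta}[r] - \dKL(\tilde q_\beta,Q) + \dKL(q,Q) \le \beta\,\EE_{\tilde q_\beta}[r].
\]
When $\beta > 0$ this yields $\EE_q[r] \le \EE_{\tilde q_\beta}[r]$, so $\tilde q_\beta$ solves \eqref{eqn:optimization_problem}.

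\textbf{Step 3 (existence of $\beta$).} The function $\beta \mapsto \dKL(\tilde q_\beta,Q)$ is continuous and equals $0$ at $\beta = 0$. Its derivative is $\beta\,\mathrm{Var}_{\tilde q_\beta}(r) \ge 0$, so it is nondecreasing. As $\beta \to \infty$ it tends to $-\log Q(\arg\max r)$. Therefore, for $m\delta$ below this supremum, the intermediate value theorem supplies a $\beta$ satisfying \eqref{eqn:beta_value_choice}.

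In the degenerate case where $m\delta$ exceeds the supremum, the constraint is inactive and the maximizer is $Q$ restricted to the argmax set. I would note this boundary case and otherwise assume $\delta$ lies in the nontrivial range. I expect this step to be the main obstacle, in the sense that the statement implicitly assumes such a $\beta$ exists. I would handle it by adding this range assumption explicitly, and the remaining steps are routine.

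\textbf{Step 4 (Markov product form).} Substitute \eqref{eqn:Q_definition} and Assumption~\ref{assump:reward_decomposition}:
\[
Q(y^{m+1}) e^{\beta r(y^{m+1})} = \pi_{y_0} \prod_{i=1}^m P_{y_{i-1}y_i}\, e^{\beta r_{y_{i-1}y_i}} = \pi_{y_0} \prod_{i=1}^m (G_\beta)_{y_{i-1}y_i},
\]
using \eqref{eqn:G_beta_definition}. Normalizing gives exactly the claimed expression for $q_\beta$. The denominator equals $\pi^\top G_\beta^m \mathbf 1$, which is positive because all rewards are finite and $Q$ is a probability distribution.
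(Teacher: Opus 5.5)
Your proof is correct and follows the same route as the paper. Both identify the tilted distribution \eqref{eqn:Q_general} as the Lagrangian solution of \eqref{eqn:optimization_problem}, then substitute \eqref{eqn:Q_definition} and Assumption~\ref{assump:reward_decomposition} to reach the $G_\beta$ product form. Your Gibbs-identity argument (Steps 1--2) and the monotonicity of $\beta \mapsto \dKL(q_\beta, Q)$ (Step 3) make rigorous two points the paper only cites from prior work: global optimality, and the existence of $\beta$, which indeed requires $m\delta < -\log Q(\argmax r)$.
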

\vspace{-0.6cm}
The proof of Theorem~\ref{thm:optimization_problem_q_solution} can be found in Section~\ref{sec:KL-constrained_RL}. $G_{\beta}$ is, in general, \textit{not} a stochastic matrix. However, as the next theorem shows, one can write $q_{\beta}$ as a non-stationary Markov process.
\begin{thm}[Concise]
    There exist an initial distribution $\nu$ and $m$ stochastic matrices $p^{(1)}, p^{(2)}, \dots, p^{(m)}$ such that
\begin{equation}
    q_{\beta}(y^{m + 1}) = (\nu)_{y_0} \prod_{t=1}^m p^{(t)}_{y_{t - 1} y_t}.  
\end{equation}    
\label{thm:q_simplified_expression_informal}
\end{thm}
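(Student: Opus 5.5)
We will obtain the non-stationary Markov representation by a backward recursion on \emph{value vectors}, in the spirit of a Doob $h$-transform, starting from the product form of $q_\beta$ in Theorem~\ref{thm:optimization_problem_q_solution}. Define vectors $h^{(t)} \in \RR^{|\YY|}$ for $t = m, m-1, \dots, 0$ by
\begin{equation*}
h^{(m)} = \on, \qquad h^{(t-1)} = G_\beta h^{(t)}, \quad t = m, \dots, 1,
\end{equation*}
so that $h^{(t)} = G_\beta^{m-t}\on$. Then $h^{(t)}_a = \sum_{y_{t+1},\dots,y_m} \prod_{i=t+1}^m (G_\beta)_{y_{i-1}y_i}$ with $y_t = a$. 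Since $P$ is stochastic and every $\exp(\beta r_{ij}) \ge 1 > 0$, we have $(G_\beta \on)_a \ge \sum_b P_{ab} = 1$. By induction every $h^{(t)}$ has strictly positive entries, so all the divisions below are legitimate; this positivity check is the only point that needs care.

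Next we define
\begin{equation*}
\nu_{a} = \frac{\pi_a h^{(0)}_a}{\sum_{c} \pi_c h^{(0)}_c}, \qquad p^{(t)}_{ab} = \frac{(G_\beta)_{ab}\, h^{(t)}_b}{h^{(t-1)}_a}, \quad t = 1,\dots,m.
\end{equation*}
Each $p^{(t)}$ is stochastic: its entries are nonnegative, and $\sum_b p^{(t)}_{ab} = (G_\beta h^{(t)})_a / h^{(t-1)}_a = 1$ by the recursion. Likewise $\nu \in \Delta_{|\YY|}$. Note that the denominator $\sum_c \pi_c h^{(0)}_c = \pi^\top G_\beta^m \on$ is exactly the normalizing constant $\sum_{z^{m+1}} \pi_{z_0}\prod_i (G_\beta)_{z_{i-1}z_i}$ appearing in Theorem~\ref{thm:optimization_problem_q_solution}.

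Finally we verify the factorization by telescoping:
\begin{equation*}
\nu_{y_0}\prod_{t=1}^m p^{(t)}_{y_{t-1}y_t} = \frac{\pi_{y_0} h^{(0)}_{y_0}}{\pi^\top G_\beta^m \on}\prod_{t=1}^m \frac{(G_\beta)_{y_{t-1}y_t} h^{(t)}_{y_t}}{h^{(t-1)}_{y_{t-1}}} = \frac{\pi_{y_0}\prod_{t=1}^m (G_\beta)_{y_{t-1}y_t}\, h^{(m)}_{y_m}}{\pi^\top G_\beta^m \on}.
\end{equation*}
The ratios $h^{(t)}_{y_t}/h^{(t)}_{y_t}$ cancel for $t = 0,\dots,m-1$, and $h^{(m)} = \on$, so the right-hand side equals $q_\beta(y^{m+1})$ as given in Theorem~\ref{thm:optimization_problem_q_solution}. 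This completes the argument. There is no substantive obstacle beyond the positivity of $h^{(t)}$. Irreducibility of $P$ is not even needed here, though it will matter later for the Perron--Frobenius asymptotics of $p^{(t)}$ as $m-t\to\infty$.
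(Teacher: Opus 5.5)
Your proposal is correct and is essentially the paper's own argument (Theorem~\ref{thm:q_simplified_expression}, proved in Appendix~\ref{appendix:q_expression}). It uses the same backward vectors $h^{(t)} = G_\beta^{m-t}\on$, the same $\nu$ and $p^{(t)}$, stochasticity via $G_\beta h^{(t)} = h^{(t-1)}$, and the same telescoping of $h^{(t)}_{y_t}/h^{(t-1)}_{y_{t-1}}$; your explicit check that every entry of $h^{(t)}$ is at least $1$ (so all divisions are legitimate) is a small detail the paper leaves implicit.
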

Explicit characterizations of $\nu$ and the $p^{(t)}$'s can be found in Theorem~\ref{thm:q_simplified_expression}.

Interestingly, while $\nu$ and the $p^{(t)}$ distributions characterize a non-stationary Markov process, for large values of $m$, this Markov process approaches stationarity and $p^{(t)} \approx p^{(t - 1)}$ for sufficiently small $t$.
\begin{thm}[Concise]
For $1 \le t_1, t_2 \le m$ such that $\lim_{m \to \infty} (m - t_1) \to \infty$ and $\lim_{m \to \infty} (m - t_2) \to \infty$, we have
$
    \lim_{m \to \infty} p^{(t_1)} = \lim_{m \to \infty} p^{(t_2)}.
$
\label{thm:p_convergence_informal}
\end{thm}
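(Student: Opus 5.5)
The approach is to write the transition matrices $p^{(t)}$ of Theorem~\ref{thm:q_simplified_expression_informal} explicitly through backward messages (a Doob $h$-transform), and then apply Perron--Frobenius to $G_\beta$. Define $h^{(t)} := G_\beta^{\,m-t}\on$ for $t = 0,\dots,m$, so that $h^{(m)}=\on$. The $y$-th entry $h^{(t)}_y$ is the total $G_\beta$-weight of all continuations from state $y$ at time $t$ up to time $m$. Marginalising the expression in Theorem~\ref{thm:optimization_problem_q_solution} over $y_{t+1},\dots,y_m$ then gives
\begin{equation*}
p^{(t)}_{ab} = \frac{(G_\beta)_{ab}\, h^{(t)}_b}{h^{(t-1)}_a} = \frac{(G_\beta)_{ab}\,(G_\beta^{\,m-t}\on)_b}{(G_\beta^{\,m-t+1}\on)_a},
\end{equation*}
together with $\nu_a \propto \pi_a h^{(0)}_a$; this is the content of Theorem~\ref{thm:q_simplified_expression}. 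The key observation is that $p^{(t)}$ depends on $t$ and $m$ only through the lag $k := m-t$. So it suffices to show that the ratio above converges as $k\to\infty$, for each fixed pair $(a,b)$. The hypothesis $m - t_i \to \infty$ is exactly what is needed to send the lag to infinity.

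Next, I establish the spectral structure of $G_\beta$. Since $\exp(\beta r_{ij})>0$, the matrix $G_\beta$ has the same zero pattern as $P$. Because $P$ is irreducible and aperiodic, it is primitive, and hence so is $G_\beta$: some power $G_\beta^{\,N}$ is entrywise positive. Perron--Frobenius then provides a simple dominant eigenvalue $\lambda>0$, a strictly positive right eigenvector $u$ and left eigenvector $w$ normalised by $w^\top u = 1$. Every other eigenvalue has modulus strictly less than $\lambda$, which yields
\begin{equation*}
\lambda^{-k} G_\beta^{\,k} \to u w^\top \quad (k\to\infty).
\end{equation*}
Applying this to $\on$ gives $\lambda^{-k} G_\beta^{\,k}\on \to (w^\top\on)\, u$, and here $w^\top \on>0$ because $w>0$.

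Dividing the numerator and denominator of $p^{(t)}_{ab}$ by $\lambda^{k+1}$ and passing to the limit, I obtain
\begin{equation*}
\lim_{k\to\infty} p^{(t)}_{ab} = \frac{(G_\beta)_{ab}\, u_b}{\lambda u_a} =: p^\star_{ab},
\end{equation*}
a stochastic matrix that does not depend on $t$. This shows that both $\lim_{m} p^{(t_1)}$ and $\lim_{m} p^{(t_2)}$ exist and equal $p^\star$. One caveat concerns $\beta$: by \eqref{eqn:beta_value_choice} it may depend on $m$. I would treat $\beta$ as fixed, i.e.\ as the per-symbol tilt parameter, which is consistent with the constraint scaling like $m\delta$ in the tilted-distribution framework. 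If $\beta=\beta_m$ converges to some $\beta^\ast$, I would instead use continuity of the Perron eigenvector in $\beta$ together with a uniform spectral gap on a compact neighbourhood of $\beta^\ast$.

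The main obstacle is this step, where the denominator must be controlled: the limit of a ratio requires the limiting denominator to be nonzero. This is where positivity of $u$ is essential, and hence where primitivity of $G_\beta$, inherited from irreducibility and aperiodicity of $P$, is needed. If $\beta_m$ varies, the further obstacle is to make the convergence $\lambda^{-k}G_\beta^{\,k}\to uw^\top$ uniform in $\beta$. For that I would use a uniform bound on the ratio of the second-largest eigenvalue modulus to $\lambda$, which is available because all of these quantities depend continuously on $\beta$.
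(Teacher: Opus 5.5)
Your proposal is correct and is essentially the paper's own argument, given in the proof of Theorem~\ref{thm:p_relation}: express $p^{(t)}$ through $h^{(t)}=G_\beta^{m-t}\on$, use primitivity of $G_\beta$ and Perron--Frobenius in the form $\lambda_\beta^{-k}G_\beta^{k}\to hg^\top$ (Lemma~\ref{lem:G_beta_perron_frobenius}), and pass to the limit in the ratio to obtain $(G_\beta)_{ab}h_b/(\lambda_\beta h_a)$. Your caveat that $\beta$ depends on $m$ through \eqref{eqn:beta_value_choice} is a real point that the paper sidesteps by silently treating $\beta$ as fixed; your suggested remedy (a locally uniform spectral gap around a limit $\beta^\ast$) would handle it, provided you also show that $\beta_m$ actually converges.
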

The proof of Theorem \ref{thm:p_convergence_informal} and more discussion about this long-term behavior of the $p^{(t)}$'s is in Section~\ref{subsec:KL-constrained_RL_long_term}.

Next, we state a large deviation principle (LDP) for the average reward under $Q$. Let $Y^{m + 1} \sim Q$ and $S_m(Y^{m + 1}):= \frac{1}{m}\cdot r(Y^{m+1})$.
\begin{thm} The random variable $S_m$ satisfies an LDP with rate
\begin{multline*}
\tilde{J} (s) = \min_{\{\theta_{ij},\theta_i\}} \Bigg\{\sum_{i, j \in \YY} \theta_i b_{ij} \log \left(\frac{b_{ij}}{P_{ij}} \right) : \\\sum_{i, j \in \YY} r_{ij} \theta_i b_{ij} = s, \quad \sum_{i \in \YY}\theta_{ij} = \theta_j \Bigg\},
\end{multline*}    
where $b_{ij} = \frac{\theta_{ij}}{\theta_i}$. Further, $\tilde{J} (s)$ is a {good} rate function.
\label{thm:rate_kl_value}
\end{thm}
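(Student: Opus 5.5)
The plan is to view $S_m$ as an additive functional of the pair chain and apply the contraction principle to the LDP for its pair empirical measure. Define the pair empirical measure
\[
L_m^{(2)} = \frac{1}{m}\sum_{i=1}^m \delta_{(Y_{i-1},Y_i)} \in \Delta_{|\YY|^2}.
\]
By Assumption~\ref{assump:reward_decomposition}, $S_m = \sum_{i,j} r_{ij} L_m^{(2)}(i,j)$. So $S_m = F(L_m^{(2)})$ with $F(\mu) = \sum_{i,j} r_{ij}\mu_{ij}$, which is a linear and hence continuous map on the compact simplex.

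\textbf{Step 1: pair empirical measure LDP.} Under Assumption~\ref{assump:Markovness}, $P$ is irreducible on a finite state space, and the initial law $\pi$ is irrelevant at exponential scale. The standard finite-state result (Dembo--Zeitouni, Theorem 3.1.13, or the method of types for Markov chains) then gives an LDP for $L_m^{(2)}$. Its good rate function is
\[
I_2(\theta) = \sum_{i,j}\theta_{ij}\log\frac{\theta_{ij}}{\theta_i P_{ij}}
\]
on the set of $\theta\in\Delta_{|\YY|^2}$ with equal marginals, $\sum_i\theta_{ij} = \theta_j = \sum_k \theta_{jk}$, and $I_2 = +\infty$ elsewhere. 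The convention is $0\log 0 = 0$, and the value is $+\infty$ if $\theta_{ij}>0$ while $P_{ij}=0$.

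Writing $b_{ij} = \theta_{ij}/\theta_i$, we have $\theta_{ij} = \theta_i b_{ij}$. Hence $I_2(\theta) = \sum_{i,j}\theta_i b_{ij}\log(b_{ij}/P_{ij})$, which is exactly the objective in $\tilde J$. The constraint $\sum_{i,j} r_{ij}\theta_i b_{ij} = s$ is $F(\theta) = s$. The constraint $\sum_i \theta_{ij} = \theta_j$, with $\theta_j$ interpreted as the row sum, is the balance condition.

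\textbf{Step 2: contraction.} Since $F$ is continuous, the contraction principle yields an LDP for $S_m$ with rate
\[
\tilde J(s) = \inf\{I_2(\theta): F(\theta) = s\}.
\]
This infimum is attained because $I_2$ is lower semicontinuous and the constraint set is compact. That justifies writing $\min$, and it matches the displayed formula.

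\textbf{Step 3: goodness.} Goodness is automatic from the contraction principle, since the image of a good rate function under a continuous map is good. It can also be checked directly. The range of $S_m$ is bounded in $[\min r, \max r]$, so level sets of $\tilde J$ are bounded. They are closed because $\tilde J$ is the inf-projection of a lower semicontinuous function over a compact set, hence lower semicontinuous.

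\textbf{Main obstacle.} The main obstacle is Step 1 in full rigor, especially if $P$ has zero entries, and the handling of rows with $\theta_i = 0$, where $b_{ij}$ is undefined. Such rows contribute zero, so any stochastic choice of $b_{i\cdot}$ is allowed there.

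If the textbook pair-LDP cannot be cited directly, I would instead use the Gärtner--Ellis theorem applied to $S_m$. The log-moment generating function is
\[
\Lambda(\lambda) = \lim_m \frac1m\log\EE e^{\lambda m S_m} = \log\rho(G_\lambda),
\]
where $\rho$ is the Perron root of $G_\lambda$ from~\eqref{eqn:G_beta_definition}. Perron--Frobenius gives existence and differentiability of $\Lambda$ for irreducible $P$. One then identifies the Legendre transform $\Lambda^*$ with $\tilde J$ via the Donsker--Varadhan variational formula
\[
\log\rho(G_\lambda) = \sup_\theta\Big\{\lambda\sum_{i,j} r_{ij}\theta_{ij} - I_2(\theta)\Big\},
\]
followed by minimax duality, which is valid since $I_2$ is convex.
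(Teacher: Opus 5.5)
Your proposal is correct and follows the paper's proof. The paper likewise takes the Markovian Sanov theorem for the empirical transition frequencies $\theta$ (your pair empirical measure, with rate $\dKL(b,P)$ on shift-invariant $\theta$ and $\infty$ otherwise), writes $S_m=\sum_{a,b}\theta_{ab}r_{ab}$ as a continuous function of $\theta$, and applies the contraction principle, which also gives goodness of $\tilde J$; your additional points on attainment of the minimum, rows with $\theta_i=0$, the irrelevance of the initial law, and the G\"{a}rtner--Ellis fallback are sound refinements but not needed.
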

\vspace{-0.3cm}
The definition of a good rate function above is standard in the LDP literature (see, for instance, \citet[Section 1.2]{ldp_book}). The proof of Theorem \ref{thm:rate_kl_value} is in Section~\ref{sec:LDP_average_reward}.

Finally, we show that the distributions produced by the two alignment methods are asymptotically close, as $m$ tends to infinity. Note that the KL divergence between the \bestofn{} distribution and the reference LM distribution, $\dKL(\qbest, Q)$, is at most $\log (n)$ (as in \citep[Theorem 3.1]{best_of_n_policy}). In particular, by taking $n = \exp(m \delta)$, we get $\dKL(\qbest, Q) \le m \delta$. The \bestofn{} distribution thus maximizes the \textit{empirical} reward while implicitly maintaining $\dKL(\qbest, Q) \le m \delta$, while the tilted distribution formulation\footnote{Recall that $q_\beta$ is also a function of $m$ but we omit that for brevity.} for $q_\beta$ explicitly includes a KL-constraint between the tilted distribution and the LM distribution (i.e. $\dKL(q_\beta, Q) \le m \delta$) while maximizing the reward. It is thus somewhat intuitive to expect that the normalized KL distance between the tilted and \bestofn{} distributions asymptotically goes to $0$. We formalize this intuition in the following theorem.
\begin{thm}
For $n = \exp(m \delta)$, we have
\begin{equation} \lim_{m \to \infty}\frac 1m \dKL(\qbest, q_\beta) = 0.    
\end{equation}
\label{thm:KL-constrained_best-of-n}
\end{thm}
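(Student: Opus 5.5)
The plan follows the i.i.d.\ argument of \citet{asymptotics_of_LM_alignment}, with Cram\'er's theorem replaced by the Markov-chain LDP of Theorem~\ref{thm:rate_kl_value}. Throughout, $n=\exp(m\delta)$.

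\textbf{Step 1: an explicit form for $\qbest$.} Since the reward takes finitely many values, I first order the sequences by reward. For $y^{m+1}$, define
\[
F(y)=\PP_{Q}\bigl(r(Y)\le r(y)\bigr), \qquad F^-(y)=\PP_Q\bigl(r(Y)<r(y)\bigr).
\]
With ties broken uniformly, the best-of-$n$ output then has law
\[
\qbest(y)=Q(y)\,\frac{F(y)^n-F^-(y)^n}{F(y)-F^-(y)}\le n\,Q(y)\,F(y)^{n-1}.
\]

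\textbf{Step 2: expand the normalized divergence.} Using the tilted form of $q_\beta$ from Theorem~\ref{thm:optimization_problem_q_solution}, we have $\log q_\beta(y)=\log Q(y)+\beta r(y)-\log Z_m(\beta)$, where $Z_m(\beta)=\EE_Q[e^{\beta r}]$. Hence
\[
\tfrac1m\dKL(\qbest,q_\beta)=\tfrac1m\dKL(\qbest,Q)-\beta\,\EE_{\qbest}[S_m]+\tfrac1m\log Z_m(\beta).
\]
Three limits are needed.

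\emph{The log-partition term.} By Perron--Frobenius for the irreducible nonnegative matrix $G_\beta$, $\tfrac1m\log Z_m(\beta)\to\Lambda(\beta)=\log\rho(G_\beta)$. Here $\Lambda$ is the Legendre dual of $\tilde J$ in Theorem~\ref{thm:rate_kl_value}, by G\"artner--Ellis.

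\emph{The divergence constraint.} The condition $\dKL(q_\beta,Q)=m\delta$, written through $q_\beta$, reads $\beta\,\EE_{q_\beta}[S_m]-\tfrac1m\log Z_m=\delta$. In the limit this gives $\beta s^*-\Lambda(\beta)=\delta$ with $s^*=\Lambda'(\beta)$, so that $\tilde J(s^*)=\delta$. I also need $\beta=\beta_m$ to converge; this should follow from strict convexity of $\Lambda$ on the nondegenerate part, with the degenerate case of constant reward being trivial.

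\emph{The best-of-$n$ terms.} It remains to show $\EE_{\qbest}[S_m]\to s^*$ and $\tfrac1m\dKL(\qbest,Q)\to\delta$. Granting these, the right-hand side tends to $\delta-\beta s^*+\Lambda(\beta)=0$.

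\textbf{Step 3: the best-of-$n$ limits (the main obstacle).} This is where the LDP is used. By Theorem~\ref{thm:rate_kl_value}, $\PP_Q(S_m\ge s)=e^{-m\tilde J(s)+o(m)}$ for $s$ above the mean; this requires continuity of $\tilde J$ on the interior of its domain, which holds since it is convex and finite there.

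\emph{Concentration of $\qbest$.} Take $s>s^*$. Then $\qbest(S_m\ge s)\le n\,\PP_Q(S_m\ge s)=e^{m(\delta-\tilde J(s))+o(m)}\to0$. Take $s<s^*$. Then $\qbest(S_m\le s)\le \PP_Q(S_m\le s)^{n}\le (1-e^{-m\tilde J(s)+o(m)})^{e^{m\delta}}\to0$, because $\tilde J(s)<\delta$. Boundedness of $S_m$ by $\max r_{ij}$ then gives $\EE_{\qbest}[S_m]\to s^*$.

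\emph{The divergence term.} For $\tfrac1m\dKL(\qbest,Q)$, the upper bound $\le\delta$ is already known. For the lower bound I would write
\[
\dKL(\qbest,Q)=\EE_{\qbest}\log\frac{\qbest}{Q}
\]
and use the exact formula from Step~1. On the event $\{S_m\approx s^*\}$, which carries almost all the $\qbest$-mass, I expect the ratio to be $\approx n\,F^{n-1}$ with $F^{n-1}$ not exponentially small, because $1-F\approx e^{-m\delta}$ there. Ties complicate this: they are exponentially many sequences sharing one reward level, and the ratio of the $F$-difference must be controlled by $n F^{n-1}$ times a subexponential factor.

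If this direct route is messy, a cleaner alternative avoids it altogether. The only role of $\dKL(\qbest,Q)$ and $\EE_{\qbest}[S_m]$ in Step~2 is through $\tfrac1m\EE_{\qbest}[\log\qbest-\log Q-\beta r]$. Since $\log(\qbest/Q)$ is, up to $o(m)$, a function of the reward level alone, this quantity can be evaluated directly by the LDP level-by-level. Handling this uniformly over polynomially many reward levels ($|\YY|^2$ rewards, so $\mathrm{poly}(m)$ distinct sums in the lattice case, or via a grid discretization otherwise) is the step I expect to be hardest.
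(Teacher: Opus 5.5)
Your argument is correct, and once one unnecessary step is dropped it follows the paper's skeleton, with a different way of identifying the limit point.

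\textbf{The step you call the main obstacle is not needed.} You only have to show the limit is $0$, and $\dKL(\qbest,q_\beta)\ge 0$. Your decomposition, together with the pointwise bound $\qbest\le nQ$ from Step~1 (so $\dKL(\qbest,Q)\le\log n=m\delta$), already gives
\[
0\le \tfrac1m\dKL(\qbest,q_\beta)\le \delta-\beta_m\,\EE_{\qbest}[S_m]+\tfrac1m\log Z_m(\beta_m)\longrightarrow \delta-\beta s^*+\Lambda(\beta)=0 .
\]
Rearranging the same identity then gives the lower bound $\frac1m\dKL(\qbest,Q)\to\delta$ for free. The paragraph on ties and the level-by-level LDP can be deleted. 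The paper also only bounds this term from above before stating the limit, which works precisely because of this nonnegativity. Your Step~1 bound is actually better suited to the Markov setting than the paper's appeal to the bound $\log n-\frac{n-1}{n}$ of \cite{best_of_n_policy}. That bound assumes distinct rewards, and here ties are unavoidable.

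\textbf{Shared skeleton.} You have the same three-term decomposition, the same Perron--Frobenius limit for the partition function, and your concentration step is the paper's Lemma~\ref{lem:T_m_convergence}: a union bound for the upper tail and $(1-p)^n\le e^{-np}$ for the lower tail.

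\textbf{Where you differ.} The real difference is how you pin down $s^*$ and the identity $\delta-\beta\Lambda'(\beta)+\Lambda(\beta)=0$.
\begin{itemize}
\item The paper avoids G\"artner--Ellis. It solves the stationary ``KL-RL with flow'' problem explicitly via the Perron-tilted chain (Theorem~\ref{thm:p_expression}) and shows its value is $\Lambda'_\beta$. It then matches this value to $t_\delta=\max\{s:\tilde J(s)\le\delta\}$ by comparing two optimization problems (Lemma~\ref{lem:s_t_equality}). Finally it gets the identity by computing $\dKL(p,P)=\beta s_\delta-\Lambda_\beta$ directly (Lemma~\ref{lem:delta_lambda_relation}).
\item You use $\tilde J=\Lambda^*$ and pass to the limit in the actual constraint $\frac1m\dKL(q_{\beta_m},Q)=\delta$.
\end{itemize}
Your route is shorter. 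It also works with the $\beta_m$ that literally defines $q_\beta$ through \eqref{eqn:beta_value_choice}. The paper instead uses the fixed $\beta$ defined by $\dKL(p,P)=\delta$ and applies Lemma~\ref{lem:G_beta_perron_frobenius} at that $m$-independent $\beta$. So the convergence $\beta_m\to\beta$ you flag is a point the paper glosses over.

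\textbf{Closing $\beta_m\to\beta$.} Let $\Lambda_m=\frac1m\log Z_m$ and $f_m(\beta)=\beta\Lambda_m'(\beta)-\Lambda_m(\beta)=\frac1m\dKL(q_\beta,Q)$.
\begin{itemize}
\item $f_m$ is nondecreasing, since $f_m'=\beta\Lambda_m''\ge0$.
\item Convexity upgrades $\Lambda_m\to\Lambda$ to locally uniform convergence of both $\Lambda_m$ and $\Lambda_m'$, because the limit $\Lambda=\log\rho(G_\beta)$ is $C^1$.
\item $f$ is strictly increasing whenever $\Lambda''>0$.
\end{itemize}
Hence the roots of $f_m=\delta$ converge to the root of $f=\delta$. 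The same uniformity gives $\frac1m\log Z_m(\beta_m)\to\Lambda(\beta)$.

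\textbf{The degenerate case.} It is not only constant $r$ but $r$ cohomologous to a constant, i.e.\ $r_{ij}=c+f_j-f_i$ on the support of $P$. In that case the tilt depends only on $(y_0,y_m)$, so $\dKL(q_\beta,Q)$ stays bounded in $m$. The constraint $\dKL(q_\beta,Q)=m\delta$ is then infeasible for large $m$, rather than trivially satisfied.
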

The proof of Theorem \ref{thm:KL-constrained_best-of-n} can be found in Section~\ref{sec:connection_KL_best-of-n}.

\subsection{Our contributions towards \Qtwo{}}
\label{sec:results_Q2}

As shown in Theorem \ref{thm:KL-constrained_best-of-n}, the KL-constrained RL and the \bestofn{} distributions are asymptotically close when the length-$(m + 1)$ output of the LM is Markov with mild assumptions (or i.i.d., as in~\citep{asymptotics_of_LM_alignment}) and when $n = \exp(m \delta)$. However, this ``closeness'' result holds only when $m \to \infty$. As we argue in this section, for certain classes of distributions and reward functions, the KL-divergence under consideration could be small even if for finite, small $m$ values.

In what follows, we take $m = 1$ and attempt to characterize such distributions and (associated) reward functions. Fix some $\delta > 0$. Now, consider the tilted distribution $q_\beta$ and the \bestofn{} distribution, $\qbestn$, with
\begin{equation}
    \delta = \log (n) - \frac{n - 1}{n}
    \label{eqn:delta_n_relation}
\end{equation}
as in~\citet[Theorem 3.1]{best_of_n_policy} and both distributions supported on a set $\YY$. Our definition of ``smallness'' of KL-divergence between $q_\beta$ and $\qbestn$ is that the KL-divergence be \textit{exactly zero}; in other words, we want $q_\beta = \qbestn$ almost surely. We assume that independent samples from $Q$ have distinct rewards almost surely as in \citet{best_of_n_policy}. Our interest is in understanding which choices of the initial LM distribution $Q$ and reward $r$ achieve this. We first discuss an example for which this KL-divergence is zero.

\begin{example} \label{example:KL_alignment_zero}
Suppose that $Y$ has the p.d.f. $Q(y) = 1,\ y \in (0, 1]$. Let
\begin{equation}
    r(y) = b + \frac{n - 1}{\beta}\cdot \log(y),\  y \in (0, 1],
\end{equation}
where we recall that $\beta$ is such that $\dKL(q_{\beta}, Q) = \delta$.

The KL-constrained RL distribution $q_\beta$ obeys, for a suitable normalization constant $C > 0$,
\begin{align}
    q_\beta(y) & = C \cdot 1 \cdot \exp \left(\beta \left(b + \frac{n - 1}{\beta} \log(y)\right) \right)\\
    & = C \cdot \exp \left(\beta b \right) y^{n - 1} \overset{(a)}{=} n y^{n - 1} \ \forall 0 < y \le 1,
\end{align}
where $(a)$ follows since $q_\beta$ must integrate to $1$. It is also possible to show that
$
\dKL(q_{\beta}, Q) \le \log (n) - \frac{n - 1}{n} = \delta,
$
where the equality follows from~\eqref{eqn:delta_n_relation}. Finally, note that the \bestofn{} distribution is simply the maximum of $n$ i.i.d. random variables, each drawn from $Q(y)$. Thus,
\begin{equation}
    \qbestn(y) = n y^{n - 1} ,\ 0 < y \le 1.
\end{equation}
Clearly, $q_\beta$ equals $\qbestn$, and $\dKL(q_\beta,\qbestn) = 0$.   
\end{example}

In the following theorem, we provide a general result on classes of $Q$ and $r$ such that the $\dKL(q_\beta,\qbestn) = 0$. Suppose that the random variable $Y$ has the distribution $Q$ and let $L = r(Y)$, and let $\LL$ denote the set of values that $L$ can take. Further, let $p_L$ denote the PDF of $L$, with associated CDF $P_L$.
\begin{thm}
Suppose $n\ge 2$ and the rewards are distinct almost surely. Then,
$
\qbestn = q_\beta    
$
almost surely if and only if $\exists \beta > 0,\ b\in\mathbb R$ such that
\begin{equation}
P_L(l) = \begin{cases}
\exp \left(\frac{\beta}{n - 1}(l-b) \right), & l\le b,\\
1, & l>b.
\end{cases}
\label{eqn:continuous_exponential_form}
\end{equation}
Further, the value of $n$ must satisfy $\log (n) - \frac{n - 1}{n} = \delta$, where we recall that $\beta$ in~\eqref{eqn:continuous_exponential_form} is picked satisfying $\dKL(q_\beta, Q) = \delta$ from~\eqref{eqn:beta_value_choice}.
\label{thm:continuous_KL_separation}
\end{thm}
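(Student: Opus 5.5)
The plan is to reduce everything to the one-dimensional reward variable $L=r(Y)$ and compare the two densities pointwise there. For $m=1$, Theorem~\ref{thm:optimization_problem_q_solution} (with the Markov structure trivial) gives the tilted form $q_\beta(y)=Q(y)e^{\beta r(y)}/Z_\beta$, where $Z_\beta=\EE_Q[e^{\beta r(Y)}]$. Since the rewards are almost surely distinct, the best-of-$n$ output is the sample with the largest reward. The standard order-statistics computation then gives $\qbestn(y)=n\,P_L(r(y))^{n-1}Q(y)$. This is the same computation that yields $ny^{n-1}$ in Example~\ref{example:KL_alignment_zero}.

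Both densities are therefore $Q(y)$ times a function of $r(y)$ alone. So $\qbestn=q_\beta$ almost surely is equivalent to
\begin{equation*}
n\,P_L(l)^{n-1}=\frac{e^{\beta l}}{Z_\beta}\quad\text{for } p_L\text{-a.e. } l\in\LL .
\end{equation*}
I would justify this by pushing forward along $r$: the identity holds $Q$-a.s. in $y$ exactly when it holds $p_L$-a.e. in $l$.

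\textbf{Necessity.} From the displayed identity, $P_L(l)=(nZ_\beta)^{-1/(n-1)}e^{\beta l/(n-1)}$ on the support of $L$. Because $\beta>0$, the right side is strictly increasing in $l$. A CDF can reach the value $1$ only at the top of the support, so the support is bounded above; call its supremum $b$. The identity forces $(nZ_\beta)^{-1/(n-1)}e^{\beta b/(n-1)}=1$, and therefore $P_L(l)=\exp\big(\tfrac{\beta}{n-1}(l-b)\big)$. This expression is positive everywhere, so the support must extend down to $-\infty$, and the formula then holds for every $l\le b$ by right-continuity and monotonicity.

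The main obstacle is this support argument. I have to rule out gaps in the support and atoms. Atoms are excluded by the distinctness assumption, which makes $P_L$ continuous. Gaps are excluded because on a gap $P_L$ is constant while the exponential is strictly increasing, and continuity at the gap's endpoints then gives a contradiction. One more point needs attention: the statement says $\beta>0$. The case $\beta=0$ would force $P_L$ to be constant, which is impossible, so $\beta>0$ is automatic once $\delta>0$.

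\textbf{Sufficiency.} Given~\eqref{eqn:continuous_exponential_form}, compute $nP_L(l)^{n-1}=ne^{\beta(l-b)}$. Then check that $Z_\beta=\EE[e^{\beta L}]=\int_{-\infty}^b e^{\beta l}\tfrac{\beta}{n-1}e^{\beta(l-b)/(n-1)}\,dl = e^{\beta b}/n$. So the identity holds and $\qbestn=q_\beta$.

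\textbf{Relation between $n$ and $\delta$.} Finally, compute $\dKL(q_\beta,Q)=\EE_{q_\beta}[\log(nP_L(L)^{n-1})]=\log n+(n-1)\EE[\log U_{\max}]$. Here $U=P_L(L)$ is uniform and $U_{\max}$ has density $nu^{n-1}$, so $\EE[\log U_{\max}]=-1/n$. This gives $\log n-\tfrac{n-1}{n}$. Because $\beta$ is pinned down by $\dKL(q_\beta,Q)=\delta$, we get $\delta=\log n-\tfrac{n-1}{n}$, which matches~\eqref{eqn:delta_n_relation}.
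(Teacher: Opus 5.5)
Your proposal is correct and follows essentially the same route as the paper: equate the Radon--Nikodym derivatives $nP_L(r(y))^{n-1}$ and $e^{\beta r(y)}/A_\beta$ with respect to $Q$, solve for the exponential CDF, and for the converse check that $A_\beta = e^{\beta b}/n$. The differences are minor: your support argument (ruling out a finite lower endpoint and gaps via continuity of $P_L$) is more careful than the paper's, and you compute $\dKL(q_\beta,Q)$ directly from $\EE[\log U_{\max}]=-1/n$ instead of via Corollary~\ref{corollary:KL_divergence} and the KL formula for exponential laws, which cleanly gives $\log n - \frac{n-1}{n}$ (the paper's final display has a sign slip at that step).
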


Thus, if the CDF of the reward $L$ is a suitably shifted exponential, the KL-divergence between the KL-constrained RL and \bestofn{} distributions is zero.

Note that the above theorem implies that the KL divergence between the distributions produced by the two alignment methods is zero only in the continuous case when $m = 1$. However, typical LM output distributions are \textit{discrete}. Thus, we consider the discrete analogue of Example~\ref{example:KL_alignment_zero}, and provide the following theorem for the dependence of the KL separation on the vocabulary size, thus shedding light on a question raised in~\citet{asymptotics_of_LM_alignment} on the relevant KL-divergence for small $m$ values.

Consider a vocabulary of size $d$, defined over the words in $\YY_d = \{\frac 1d, \frac 2d, \dots, \frac dd\}$ and consider the uniform distribution $Q$ on $\YY_d$ with $Q(y) = \frac 1d,\ \forall y \in \YY_d$. For finite $d$, the reward function $r(y) = b + \frac{(n - 1)}{\beta}\cdot \log(y),\ \forall y \in (0, 1]$ which makes the \bestofn{} and tilted distributions the same in the continuous case, no longer makes the KL divergence between the two distributions equal to zero in the setting of a uniform, but discrete, distribution $Q$. However, we consider the same reward function even for the discrete case, and argue that the KL-divergence of interest in fact decays to zero for large enough vocabulary size $d$ in the following theorem.
\begin{thm}
For $n \ge 4$, when $Q$ is uniform over $\YY_d$ and $r(y) = b + \frac{(n - 1)}{\beta} \log(y)$, for $y\in (0,1]$, we have 
\begin{equation}
\dKL(\qbestn, q_\beta) = \frac{n}{8(n-2)}\frac1{d^2}+O\left(\frac{1}{d^3}\right).
\end{equation}
Further, $q_\beta$ satisfies $\dKL(q_\beta, Q) \le \delta$, as in~\eqref{eqn:beta_value_choice}.
\label{thm:discrete_KL_separation}
\end{thm}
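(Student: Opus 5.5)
The plan is to compute both distributions in closed form and then expand $\dKL(\qbestn,q_\beta)$ in powers of $1/d$. Write $y=k/d$ with $k\in\{1,\dots,d\}$. Since $r$ is strictly increasing in $y$, the rewards of distinct words are distinct. The \bestofn{} output is therefore the maximum of $n$ i.i.d.\ uniform draws on $\YY_d$, which gives
\[
\qbestn(k/d)=\frac{k^n-(k-1)^n}{d^n}.
\]
For the tilted distribution, Theorem~\ref{thm:optimization_problem_q_solution} with $m=1$ gives $q_\beta(y)\propto Q(y)e^{\beta r(y)}\propto y^{n-1}$; note that both $\beta$ and $b$ cancel. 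Hence
\[
q_\beta(k/d)=\frac{k^{n-1}}{S_d},\qquad S_d=\sum_{j=1}^d j^{n-1}.
\]
The likelihood ratio is then $\rho_k=\qbestn(k/d)/q_\beta(k/d)=S_d\,(k^n-(k-1)^n)/(d^n k^{n-1})$.

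\textbf{Expanding the two factors of $\rho_k$.} By the binomial expansion,
\[
k^n-(k-1)^n=nk^{n-1}\Bigl(1-\tfrac{n-1}{2k}+\tfrac{(n-1)(n-2)}{6k^2}-\cdots\Bigr).
\]
By Faulhaber/Euler--Maclaurin,
\[
S_d=\tfrac{d^n}{n}\Bigl(1+\tfrac{n}{2d}+O(d^{-2})\Bigr).
\]
Together these give $\rho_k=1+\varepsilon_k$ with
\[
\varepsilon_k=\frac{n}{2d}-\frac{n-1}{2k}+O\!\left(\frac1{d^2}+\frac1{kd}+\frac1{k^2}\right).
\]
Here $\varepsilon_k$ is $O(1)$ only for bounded $k$, and those indices carry tiny mass $q_\beta(k/d)\asymp k^{n-1}/d^n$.

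\textbf{The leading term.} I will write
\[
\dKL(\qbestn,q_\beta)=\sum_k q_\beta\,\rho_k\log\rho_k=\sum_k q_\beta\bigl[\varepsilon_k+\tfrac12\varepsilon_k^2-\tfrac16\varepsilon_k^3+\cdots\bigr].
\]
The linear term vanishes because $\sum_k q_\beta\varepsilon_k=\sum_k(\qbestn-q_\beta)=0$. The quadratic term is a Riemann sum. With $x=k/d$ and $q_\beta\approx n x^{n-1}/d$ it equals
\[
\frac{1}{8d^2}\int_0^1 n x^{n-1}\Bigl(n-\frac{n-1}{x}\Bigr)^2dx
=\frac{n}{8d^2}\Bigl[n-2n+\frac{(n-1)^2}{n-2}\Bigr]
=\frac{n}{8(n-2)}\frac1{d^2}.
\]

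\textbf{The main obstacle: the $O(d^{-3})$ remainder.} Two sources of error must be controlled.
\begin{itemize}
\item The Riemann-sum error. The integrand near $x=0$ behaves like $x^{n-3}$, so the Euler--Maclaurin correction is $O(1/d)$ relative, provided $n-3\ge1$.
\item The cubic and higher Taylor terms, together with the $O(1/k^2)$ corrections inside $\varepsilon_k$. These contribute sums of the form $d^{-n}\sum_k k^{n-1}\cdot k^{-3}\lesssim d^{-3}$, which is fine once $n\ge4$ so that $\sum_{k\le d}k^{n-4}\asymp d^{n-3}$ without a logarithm.
\end{itemize}
For bounded $k$ the logarithm cannot be Taylor-expanded. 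I plan to split the sum at $k=K_0$ fixed, or at $k=\sqrt d$. For small $k$ I will bound $|\rho_k\log\rho_k|=O(1)$ directly; their total mass is $O(K_0^n/d^n)=O(d^{-3})$ since $n\ge4$. For large $k$ I will use the expansion. This is where the hypothesis $n\ge4$ enters.

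\textbf{The constraint $\dKL(q_\beta,Q)\le\delta$.} I will write
\[
\dKL(q_\beta,Q)=\sum_k q_\beta\log\!\bigl(d\,q_\beta(k/d)\bigr)=\sum_k q_\beta\log\frac{d\,k^{n-1}}{S_d}.
\]
I then compare it with the continuous value $\int_0^1 nx^{n-1}\log(nx^{n-1})\,dx=\log n-\frac{n-1}{n}=\delta$ from Example~\ref{example:KL_alignment_zero}. Two monotonicity facts should give the inequality. First, $d^n/S_d\le n$, because the right-endpoint Riemann sum dominates $\int_0^1 x^{n-1}\,dx$. Second, $\sum_k q_\beta\log(k/d)^{n-1}$ is a discrete expectation of the increasing function $(n-1)\log x$ under a law stochastically dominated by $nx^{n-1}\,dx$. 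I will check the sign of the second comparison carefully; if it fails, I will instead show the difference is $O(1/d)$ and choose $\beta$ with $\dKL(q_\beta,Q)\le\delta$ accordingly.
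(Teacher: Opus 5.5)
Your derivation of $\dKL(\qbestn,q_\beta)=\frac{n}{8(n-2)}d^{-2}+O(d^{-3})$ is correct and is essentially the paper's argument. It uses the same closed forms and the same expansion $\rho_k=1+\frac{n}{2d}-\frac{n-1}{2k}+O(k^{-2})$, kills the linear term by normalization, and evaluates the quadratic term; your Riemann integral is the paper's Faulhaber computation with $S_{d,n-1},S_{d,n-2},S_{d,n-3}$. On the small-$k$ region:
\begin{itemize}
\item A fixed cutoff $K_0$ works.
\item The cutoff $\sqrt d$ with the crude mass bound gives only $O(d^{-n/2})$, which is not $O(d^{-3})$ for $n=4,5$.
\item No split is really needed. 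Since $\rho_k=\frac{S_d}{d^n}\sum_{j=0}^{n-1}(1-\frac1k)^j$ and $\frac1n\le S_d/d^n\le 1$, we have $\frac1n\le\rho_k\le n$. So the cubic Taylor remainder bound holds uniformly in $k$.
\end{itemize}

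The genuine gap is in the second claim, $\dKL(q_\beta,Q)\le\delta$. Your second ``monotonicity fact'' is false. The law $q_\beta$ puts mass $d^{n-1}/S_d\approx n/d$ on $y=1$, so it cannot be stochastically dominated by the atomless law $nx^{n-1}\,dx$. Quantitatively, Euler--Maclaurin gives
\[
\sum_k q_\beta(k/d)\log(k/d)=-\tfrac1n+\tfrac1{2d}+O(d^{-2}),
\]
which is strictly larger than $\int_0^1 nx^{n-1}\log x\,dx=-\frac1n$. Already for $n=4$, $d=2$ the left side is $-\frac19\log 2\approx-0.08>-0.25$. The inequality is nevertheless true, but only because of a cancellation between your two comparisons:
\begin{itemize}
\item the slack in the first, $\log n-\log(d^n/S_d)\approx\frac{n}{2d}$,
\item beats the excess in the second, $(n-1)\cdot\frac1{2d}$,
\end{itemize}
giving $\dKL(q_\beta,Q)=\delta-\frac1{2d}+O(d^{-2})$. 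So the two pieces cannot be bounded separately. Your fallback does not close this either. As you noted, $q_\beta$ does not depend on $\beta$, so there is nothing to tune. An $O(1/d)$ estimate yields only $\delta+O(1/d)$, and even the sharp asymptotic covers only large $d$.

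The fix is to compare $q_\beta$ with $\qbestn$ rather than with the continuous density. Since $\rho_k$ is increasing in $k$, the paper deduces that $\qbestn$ majorizes $q_\beta$. Because $Q$ is uniform, $\dKL(\cdot,Q)=\log d-H(\cdot)$ is Schur-convex, so $\dKL(q_\beta,Q)\le\dKL(\qbestn,Q)$. Moreover, $\qbestn$ and $Q$ are the images of $nx^{n-1}\,dx$ and $\mathrm{Unif}(0,1]$ under $x\mapsto\lceil dx\rceil/d$. Data processing then gives $\dKL(\qbestn,Q)\le\log n-\frac{n-1}{n}=\delta$.

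Your stochastic-dominance idea also works once it is aimed at $\qbestn$. The increasing likelihood ratio gives $q_\beta\preceq_{\mathrm{st}}\qbestn$. Because $q_\beta$ is the tilt of $Q$ by $(n-1)\log y$,
\[
\dKL(\qbestn,Q)-\dKL(q_\beta,Q)=\dKL(\qbestn,q_\beta)+(n-1)\bigl(\EE_{\qbestn}[\log Y]-\EE_{q_\beta}[\log Y]\bigr)\ge 0.
\]
The same data-processing step then finishes the proof.
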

The proof of Theorems~\ref{thm:continuous_KL_separation} and~\ref{thm:discrete_KL_separation} and additional, related discussion can be found in Section~\ref{sec:small_m_connection_KL_best-of-n}.

\begin{figure}[!htbp]
    \centering
    \includegraphics[width=0.8\linewidth]{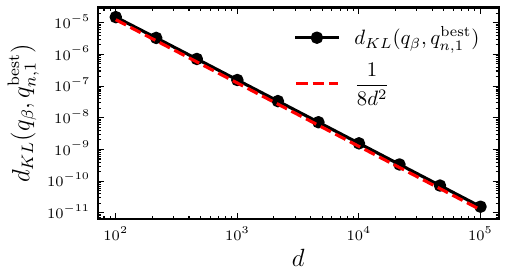}
    \caption{$\dKL(q_\beta, \qbestn)$ and the theoretical asymptotic rate $\frac{1}{8d^2}$ vs $d$ for $n = 10$ from Theorem~\ref{thm:discrete_KL_separation}.}
    \label{fig:KL_separation_vs_d}
\end{figure}

In Figure~\ref{fig:KL_separation_vs_d}, we verify that the theoretical rate from Theorem~\ref{thm:discrete_KL_separation} is indeed correct by plotting the calculated values of $\dKL(\qbestn, q_\beta)$ against $d$ and comparing it against the theoretical (asymptotic) rate of $\left( \frac{1}{8d^2} \right)$.

\section{KL-Constrained RL Solution}
\label{sec:KL-constrained_RL}

We first consider the solution to the optimization problem~(\ref{eqn:optimization_problem}), thereby proving Theorem~\ref{thm:q_expression}.

\begin{proof}[Proof of Theorem~\ref{thm:q_expression}] It is possible to write a Lagrangian to obtain~\eqref{eqn:Q_general}, analogous to~\citet[Lemma 1]{asymptotics_of_LM_alignment}. The solution to~\eqref{eqn:optimization_problem} in general is the \textit{tilted} distribution \citep{salamatian_tilted_distribution} for $y^{m + 1} \in \YY^{m+1}$:
\begin{equation}
    q_{\beta}(y^{m + 1}) = \frac{Q(y^{m + 1}) \exp(\beta r(y^{m + 1}))}{\sum_{z^{m + 1} \in \YY^{m+1}} Q(z^{m + 1}) \exp(\beta r(z^{m + 1}))}.
    \label{eqn:Q_general}
\end{equation}
Next, we use Assumptions~\ref{assump:Markovness} and~\ref{assump:reward_decomposition} and the expression for $Q$ from~\eqref{eqn:Q_definition} to obtain 
\begin{equation}
q_{\beta}(y^{m + 1})
= \frac{\pi_{y_0} \prod_{t=1}^m (P_{y_{t - 1} y_t} \exp(\beta r_{y_{t - 1} y_t}))} {\displaystyle \sum_{\substack{z^{m + 1}\\\in \YY^{m+1}}} \pi_{z_0} \prod_{t=1}^m (P_{z_{t - 1} z_t} \exp(\beta r_{z_{t - 1} z_t}))}. \nonumber
\end{equation}
Substituting $G_\beta$ from~\eqref{eqn:G_beta_definition} completes the result.
\end{proof}

We next state the formal version of Theorem~\ref{thm:q_simplified_expression_informal} with $\nu$ and the $p^{(t)}$ expressions defined explicitly. Let $h^{(t)} \in \RR^{|\YY|}$ for $0 \le t \le m$, be defined recursively as follows:
\begin{align}
    h^{(m)} &= \on_{|\YY|} \label{eqn:h_t_definition} \\ 
    h^{(t)} & = G_{\beta} h^{(t + 1)} = G_{\beta}^{m - t} \on_{|\YY|},\ 0 \le t \le m - 1 \nonumber,
\end{align}
where $\on_d$ denotes the vector of all ones in $\RR^d$.
\begin{thm}
We have
\begin{equation}
    q_{\beta}(y^{m + 1}) = (\nu)_{y_0} \prod_{t=1}^m p^{(t)}_{y_{t - 1} y_t},\ \forall y^{m+1}\in \YY^{m+1},
\end{equation}
\vspace{-0.3cm}
\begin{align*}
    \text{where } \ \nu_i & = \frac{\pi_i {h_i^{(0)}}}{\pi^\top (G_\beta)^m \on_{| \YY |}}\ \text{and}\\
    p^{(t)}_{ab} & = {(G_{\beta})}_{ab} \frac{h^{(t)}_b}{h^{(t - 1)}_a},
    \ 1 \le t \le m; a,b\in \YY.
\end{align*}
Furthermore, $\nu$ is a valid probability distribution and $p^{(t)}$, $ 1 \le t \le m$, are valid stochastic matrices.
\label{thm:q_simplified_expression}
\end{thm}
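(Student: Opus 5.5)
The plan is to start from the closed form of $q_\beta$ in Theorem~\ref{thm:q_expression} and rewrite it as a telescoping product. All entries of $G_\beta$ are nonnegative, and $P$ is irreducible, so every row of $P$ has a positive entry. Hence $G_\beta \on_{|\YY|}$ is entrywise positive. By induction, every $h^{(t)} = G_\beta^{m-t}\on_{|\YY|}$ is entrywise strictly positive. So all the ratios $h^{(t)}_b/h^{(t-1)}_a$ are well defined. Likewise $\pi^\top G_\beta^m \on_{|\YY|} = \sum_i \pi_i h^{(0)}_i > 0$, since $\pi$ is a probability vector.

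Next I will verify the product identity directly. Fix $y^{m+1}$ and multiply $\nu_{y_0}$ by $\prod_{t=1}^m p^{(t)}_{y_{t-1}y_t}$. The factor $h^{(t)}_{y_t}$ in the numerator of $p^{(t)}$ cancels the denominator $h^{(t)}_{y_t}$ of $p^{(t+1)}$. The first denominator $h^{(0)}_{y_0}$ cancels the $h^{(0)}_{y_0}$ in $\nu_{y_0}$. The last numerator is $h^{(m)}_{y_m}=1$. What remains is
\[
\frac{\pi_{y_0}\prod_{t=1}^m (G_\beta)_{y_{t-1}y_t}}{\pi^\top G_\beta^m\on_{|\YY|}}.
\]
The denominator here equals $\sum_{z^{m+1}} \pi_{z_0}\prod_{t}(G_\beta)_{z_{t-1}z_t}$ by expanding the matrix power as a sum over paths. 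The result is then exactly the expression in Theorem~\ref{thm:q_expression}.

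Finally, I will check validity. Nonnegativity of $\nu$ and of each $p^{(t)}$ is immediate from positivity of the $h$'s and nonnegativity of $\pi$ and $G_\beta$. For $\nu$, we have $\sum_i \pi_i h^{(0)}_i = \pi^\top G_\beta^m \on_{|\YY|}$, so $\nu$ sums to one. For the rows of $p^{(t)}$, we have $\sum_b (G_\beta)_{ab} h^{(t)}_b = (G_\beta h^{(t)})_a = h^{(t-1)}_a$ by the recursion~\eqref{eqn:h_t_definition}, so each row sums to one. The only step needing care is the strict positivity of the $h^{(t)}$. It rests on each row of $P$ having a positive entry, which holds because $P$ is stochastic, and on $\exp(\beta r_{ij})>0$. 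Beyond that, the argument is purely algebraic.
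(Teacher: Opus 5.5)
Your proposal is correct and follows essentially the same route as the paper: write $\pi^\top G_\beta^m \on_{|\YY|}$ as a sum over paths, telescope the ratios $h^{(t)}_{y_t}/h^{(t-1)}_{y_{t-1}}$ using $h^{(m)}=\on_{|\YY|}$, and check the normalization of $\nu$ and the row sums of $p^{(t)}$ via $G_\beta h^{(t)} = h^{(t-1)}$. Your check that each $h^{(t)}$ is entrywise positive, so that the ratios are well defined, is a worthwhile detail that the paper's proof leaves implicit.
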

The proof of this theorem involves careful algebraic manipulations and is relegated to Appendix~\ref{appendix:q_expression}.

\subsection{Long-term behavior}

\label{subsec:KL-constrained_RL_long_term}

In this subsection, we prove Theorem~\ref{thm:p_convergence_informal}, and discuss a collection of other results pertaining to the optimization problem \eqref{eqn:optimization_problem}. We first show that the matrix to which the $p^{(t)}$ probabilities converge to in Theorem~\ref{thm:p_convergence_informal}, which we call $p$, is the same as the solution to the optimization problem in~\eqref{eqn:optimization_problem} but with an additional ``flow'' constraint added.

Formally, instead of using $q$ as the optimization variable in \ref{eqn:optimization_problem}, we formulate an equivalent optimization problem over the variables $(\mu, p),\ \mu \in \RR^{|\YY|}, p \in \RR^{|\YY| \times |\YY|}$, such that the flow constraint $\mu^T p = \mu^T$ is satisfied. Note that we must have 
\begin{equation}
     q(y^{m+1}) = \mu_{y_0} \prod_{t = 1}^{m} p_{y_{t - 1} y_{t}}.
     \label{eqn:flow_constraint}
\end{equation}
The distribution $q$ thus obtained is  \textit{stationary Markov}, since the flow constraint is also incorporated.

The expected reward function obtained from simplifying~\eqref{eqn:expected_reward_expression} using~\eqref{eqn:flow_constraint} is
\begin{multline}
    \EE_{Y^{m+1} \sim q}[r(Y^{m+1})] = \sum_{y^{m+1} \in \YY^{m+1}} q(y^{m+1}) r(y^{m+1}) \\= \sum_{t = 1}^m \sum_{\substack{(a, b) \in \\\YY \times \YY}} \mu_a p_{a b} r_{a b} = m \sum_{\substack{(a, b) \in \\\YY \times \YY}} \mu_a p_{a b} r_{a b}.
\end{multline}
Furthermore, the KL-constrained RL optimization constraint in \eqref{sec:KL-constrained_RL} can be rewritten (see, for instance, \citet[Theorem 1]{markovian_kl}) as
\begin{equation}
    \dKL(q, Q) = \dKL(\mu, \pi) + m \dKL(p, P) \le m \delta,
\end{equation}
where $\dKL(p, P)$ denotes the KL between two Markov chains with transition matrices $p$ and $P$ and with stationary distributions $\mu$ and $\pi$ respectively, defined as
$
    \dKL(p, P) = \sum_{i, j \in \YY} \mu_i p_{ij} \log \left( \frac{p_{ij}}{P_{ij}} \right).
$
Equivalently, $\dKL(p, P) \le \delta + o \left(1 \right)$. We neglect the $o(1)$ term in our constraint and set our constraint to be $\dKL(p, P) \le \delta$.

This gives us the following optimization problem ($\Delta_{a \times a}$ denotes the set of $a \times a$ stochastic matrices):
\begin{multline}
\max_{\substack{\mu \in \Delta_{|\YY|}, \\ p \in \Delta_{|\YY| \times |\YY|}}} \sum_{(a, b) \in \YY \times \YY} \mu_a p_{a b} r_{a b} \\  \text{ subject to } \dKL(p, P) \le \delta,\ \mu^T p = \mu^T. \label{eqn:optimization_problem_with_flow} \tag{KL-RL with Flow}
\end{multline}

Recall the definition of $G_{\beta}$ from Theorem~\ref{thm:q_expression}. The following theorem gives the solution to the optimization problem above, where we crucially use that $P$ is aperiodic and irreducible (Assumption~\ref{assump:Markovness}) to obtain the Perron eigenvalue \citep[Page 674, Equation~8.3.10]{meyer_linear_algebra} and the corresponding eigenvector of $G_{\beta}$.
\begin{thm}
The solution to~\eqref{eqn:optimization_problem_with_flow} is
\begin{equation}
    p_{ab} = \frac{{(G_{\beta})}_{ab} h_b}{\lambda_\beta h_a}
\end{equation} and
$\mu$ is the (unique) solution to $\mu^T p = \mu^T$, where $\lambda_\beta$ is the Perron eigenvalue of $(G_{\beta})$ and $h$ is the corresponding right Perron eigenvector, i.e., 
    $G_\beta h = \lambda_\beta h$, with  $\beta$ chosen such that $\dKL(p, P) = \delta$.
\label{thm:p_expression}
\end{thm}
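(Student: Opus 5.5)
Here $\lambda_\beta$ is the Perron root of $G_\beta$ and $h$ its right eigenvector; the plan is to solve \eqref{eqn:optimization_problem_with_flow} by Lagrangian duality in edge-flow variables. First I will reparametrize by the joint edge distribution $\theta_{ab} := \mu_a p_{ab}$, with marginal $\theta_a = \sum_b \theta_{ab} = \mu_a$. The flow constraint $\mu^T p = \mu^T$ then becomes the linear balance condition $\sum_a \theta_{ab} = \sum_c \theta_{bc}$. The objective $\sum_{a,b}\theta_{ab} r_{ab}$ is linear. The divergence $\dKL(p,P) = \sum_{a,b}\theta_{ab}\log\frac{\theta_{ab}}{\theta_a P_{ab}}$ is jointly convex in $\theta$, being a sum of perspective functions of relative entropy, i.e.\ a conditional KL. So the problem is a concave maximization over a convex set. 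Slater's condition holds because $\theta_{ab} = \pi_a P_{ab}$ gives divergence $0 < \delta$. Hence KKT conditions are necessary and sufficient, and I only need to exhibit a feasible point satisfying them.

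Next I will write the Lagrangian with multiplier $1/\beta \ge 0$ on the KL constraint, potentials $\phi_b$ on the balance constraints, and a normalization multiplier $c$:
\[
\sum_{a,b}\theta_{ab} r_{ab} - \tfrac{1}{\beta}\Big(\sum_{a,b}\theta_{ab}\log\tfrac{\theta_{ab}}{\theta_a P_{ab}} - \delta\Big) + \sum_{a,b}\theta_{ab}(\phi_b-\phi_a) - c\Big(\sum_{a,b}\theta_{ab}-1\Big).
\]
Differentiating in $\theta_{ab}$ will be the main bookkeeping step. The derivative of $\theta_a\log\theta_a$ inside the conditional KL contributes the term $\log\theta_a$. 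After it cancels, stationarity gives
\[
\log\frac{\theta_{ab}}{\theta_a P_{ab}} = \beta r_{ab} + \beta(\phi_b - \phi_a) - \beta c .
\]
Exponentiating and setting $h_a := e^{\beta\phi_a}$, $\lambda := e^{\beta c}$, this reads $p_{ab} = P_{ab}e^{\beta r_{ab}} h_b/(\lambda h_a) = (G_\beta)_{ab}h_b/(\lambda h_a)$. Requiring $\sum_b p_{ab} = 1$ forces $G_\beta h = \lambda h$ with $h > 0$.

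The step I expect to need the most care is justifying that $\lambda = \lambda_\beta$ and $h$ is the Perron vector. This is where Assumption~\ref{assump:Markovness} enters. Since $P$ is irreducible and aperiodic (primitive) and $e^{\beta r_{ab}} > 0$, $G_\beta$ has the same zero pattern as $P$ and is also primitive. By Perron--Frobenius \citep{meyer_linear_algebra}, the only eigenvector with strictly positive entries is the Perron vector, with eigenvalue $\lambda_\beta$, unique up to scaling. Scaling $h$ does not change $p$, so $p$ is uniquely determined. The same zero pattern makes $p$ irreducible, so $\mu^T p = \mu^T$ has a unique solution in $\Delta_{|\YY|}$. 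Thus $\theta_{ab} = \mu_a p_{ab}$ satisfies balance and normalization, and it is the candidate.

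Finally, complementary slackness fixes $\beta$. If $r$ is non-constant, so the unconstrained maximizer lies on the boundary, the KL constraint is active and $\beta$ must satisfy $\dKL(p,P) = \delta$. To show such a $\beta$ exists, I will argue continuity of $\beta \mapsto \dKL(p_\beta,P)$: the Perron root and vector depend analytically on $\beta$, and the divergence equals $0$ at $\beta = 0$. I will also argue monotonic increase, mirroring the i.i.d.\ tilted case: the reward $\sum \theta r$ increases in $\beta$, and its derivative is proportional to the derivative of the divergence. Monotonicity is the one point I may only sketch, assuming $\delta$ lies below the supremal achievable divergence. With KKT satisfied by this $(\mu,p,\beta)$, sufficiency under convexity gives optimality, completing the proof.
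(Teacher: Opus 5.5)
Your proof is correct, and it takes a genuinely different and more rigorous route than the paper.

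\textbf{What the paper does.} It writes the Lagrangian in the original variables $(\mu,p)$ and differentiates only in $p_{ab}$, obtaining $p_{ab}=C_aD_b(G_\beta)_{ab}$. It then picks $D_b=h_b$ and $C_a=1/(\lambda_\beta h_a)$ via Perron--Frobenius and checks row-stochasticity. Uniqueness of $\mu$ comes from the similarity $p=D_h^{-1}(G_\beta/\lambda_\beta)D_h$, and the existence of $\beta$ with $\dKL(p,P)=\delta$ is simply asserted. Since the objective and the flow constraint $\mu^Tp=\mu^T$ are bilinear in $(\mu,p)$, that problem is not convex. So the paper's computation only exhibits a feasible point satisfying $p$-stationarity (stationarity in $\mu$ is never checked); it does not by itself certify optimality.

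\textbf{What your route buys.} The edge-flow reparametrization $\theta_{ab}=\mu_ap_{ab}$ makes the flow constraint linear and the divergence jointly convex, so KKT sufficiency gives global optimality. With your multipliers $\phi_a=\frac1\beta\log h_a$ and $c=\frac1\beta\log\lambda_\beta$, the Lagrangian in fact equals $\frac{\delta}{\beta}+c-\frac1\beta\sum_{a,b}\theta_{ab}\log\frac{\theta_{ab}}{\theta_a p_{ab}}$. This gives optimality and uniqueness of the optimizer at once: equality forces $\theta_{ab}=\theta_a p_{ab}$, and the balance constraint then forces $\theta_a=\mu_a$. No Slater condition or positivity argument at the boundary of the simplex is needed.

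\textbf{Existence of $\beta$.} You also handle this, which the paper omits. Two small refinements:
\begin{itemize}
\item The degenerate case is not ``$r$ constant'' but $r_{ab}=f_b-f_a+c$ on the support of $P$. Then $p_\beta\equiv P$ for every $\beta$ and the objective is constant on the feasible set. Your assumption $\delta<\sup_\beta \dKL(p_\beta,P)$ already excludes this case.
\item Monotonicity is not needed for existence: continuity, $\dKL(p_0,P)=0$, and the intermediate value theorem suffice. Monotonicity does hold, though, since $\dKL(p_\beta,P)=\beta\Lambda'_\beta-\Lambda_\beta$ has derivative $\beta\Lambda''_\beta\ge 0$.
\end{itemize}
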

The proof of this theorem involves working with the Lagrangian and is relegated to Appendix~\ref{appendix:p_expression}.

Next, suppose that the value of the solution to the optimization problem~\eqref{eqn:optimization_problem_with_flow} is $s_{\delta}$. We provide an equivalent expression for $s_{\delta}$ in the following theorem. In particular, for a fixed $\delta>0$, the value $s_{\delta}$ is related to the rate of change of $\log(\lambda_{\beta})$ with respect to $\beta$, where $\beta$ is chosen as in Theorem~\ref{thm:p_expression}. 
\begin{thm}
    Let $\Lambda_{\beta} = \log(\lambda_{\beta})$. Then, $s_{\delta} = \frac{\partial \Lambda_{\beta}}{\partial \beta} = \Lambda'_\beta$, where $\beta$ is as in Theorem~\ref{thm:p_expression}.
    \label{thm:s_delta_value}
\end{thm}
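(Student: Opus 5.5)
The plan is to compute the optimal value directly from the explicit solution in Theorem~\ref{thm:p_expression}, and then identify it with the derivative of the Perron eigenvalue via the standard first-order perturbation formula. By Theorem~\ref{thm:p_expression}, the optimum is $p_{ab} = (G_\beta)_{ab} h_b/(\lambda_\beta h_a)$ with $\mu^T p = \mu^T$. Hence
\[
s_\delta = \sum_{a,b} \mu_a p_{ab} r_{ab} = \frac{1}{\lambda_\beta}\sum_{a,b} \mu_a \frac{h_b}{h_a} P_{ab} e^{\beta r_{ab}} r_{ab}.
\]

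The first step is to identify the stationary distribution $\mu$ in terms of Perron data. Let $\ell$ be the left Perron eigenvector of $G_\beta$, so $\ell^T G_\beta = \lambda_\beta \ell^T$. Since $P$ is irreducible and aperiodic and $e^{\beta r_{ab}}>0$, the matrix $G_\beta$ has the same positivity pattern as $P$, so Perron--Frobenius gives positive $\ell$ and $h$. I would check that $\mu_a = \ell_a h_a / (\ell^T h)$ satisfies the flow equation:
\[
\sum_a \ell_a h_a \frac{(G_\beta)_{ab} h_b}{\lambda_\beta h_a} = \frac{h_b}{\lambda_\beta}(\ell^T G_\beta)_b = \ell_b h_b .
\]
Uniqueness of the stationary distribution of the irreducible matrix $p$ then identifies $\mu$. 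Substituting, we get
\[
s_\delta = \frac{\sum_{a,b} \ell_a (G_\beta)_{ab} r_{ab} h_b}{\lambda_\beta\, \ell^T h} = \frac{\ell^T G'_\beta h}{\lambda_\beta\, \ell^T h},
\]
where $G'_\beta := \partial G_\beta/\partial\beta$ has entries $P_{ab} r_{ab} e^{\beta r_{ab}}$.

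The second step is to show $\lambda'_\beta = \ell^T G'_\beta h / (\ell^T h)$. Since $\lambda_\beta$ is a simple eigenvalue (Perron), it and a normalized choice of $h$ are differentiable in $\beta$, for instance by the implicit function theorem applied to the characteristic polynomial, or via analytic perturbation theory. Differentiating $G_\beta h = \lambda_\beta h$ gives
\[
G'_\beta h + G_\beta h' = \lambda'_\beta h + \lambda_\beta h'.
\]
Left-multiplying by $\ell^T$ cancels the $h'$ terms, because $\ell^T G_\beta h' = \lambda_\beta \ell^T h'$. This leaves $\ell^T G'_\beta h = \lambda'_\beta\, \ell^T h$. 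Dividing by $\lambda_\beta$ yields $s_\delta = \lambda'_\beta/\lambda_\beta = \Lambda'_\beta$, evaluated at the $\beta$ fixed by $\dKL(p,P)=\delta$.

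The main obstacle is justifying differentiability of the Perron pair, which is routine given simplicity of $\lambda_\beta$, together with correctly identifying $\mu$ with the normalized product $\ell\circ h$. I expect the rest to be bookkeeping.
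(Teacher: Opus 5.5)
Your proposal is correct and follows essentially the same route as the paper. Both identify $\mu$ with the normalized entrywise product of the left and right Perron eigenvectors, then differentiate $G_\beta h = \lambda_\beta h$ and left-multiply by the left eigenvector to obtain $\lambda'_\beta$ as the bilinear form $\ell^T G'_\beta h/(\ell^T h)$, which equals $\lambda_\beta s_\delta$; your justification of differentiability via simplicity of the Perron eigenvalue, which the paper uses without comment, is a small added bit of rigor.
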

The proof of this theorem involves taking the derivative of $\Lambda_\beta$ using the multivariable chain rule and is relegated to Appendix~\ref{appendix:s_delta_value}.

Next, we state the following lemma, which follows from the Perron-Frobenius Theorem (see, for instance~\citet[Page~674, Equation~8.3.10]{meyer_linear_algebra}) and the proof of which is relegated to Appendix~\ref{appendix:G_beta_perron_frobenius}. Let $g$ be the left Perron eigenvector of $G_{\beta}$, where the magnitude of $g$ is taken such that $g^T h = 1$. 
\begin{lem}
We have $G_{\beta}^n = \lambda_\beta^n h g^T + o(\lambda_\beta^n).$
\label{lem:G_beta_perron_frobenius}
\end{lem}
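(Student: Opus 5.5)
The plan is to apply the Perron--Frobenius theorem to $G_\beta$ after checking that $G_\beta$ is primitive. $G_\beta$ has the same zero pattern as $P$, because $\exp(\beta r_{ij})>0$ for every entry. $P$ is irreducible and aperiodic, so it is primitive: there is some $k$ with $P^k>0$ entrywise. For that same $k$, $(G_\beta)^k>0$, since the $(i,j)$ entry of $G_\beta^k$ is a sum over length-$k$ paths from $i$ to $j$. Each term in that sum is a product of $P$-entries times a strictly positive exponential factor, so the entry is positive exactly when $(P^k)_{ij}$ is. Hence $G_\beta$ is a nonnegative primitive matrix.

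Perron--Frobenius (Meyer, Eq.~8.3.10) then gives three things:
\begin{itemize}
\item a simple real eigenvalue $\lambda_\beta>0$ equal to the spectral radius;
\item strictly positive right and left eigenvectors $h$ and $g$, which can be normalized so that $g^T h=1$;
\item a strict gap, i.e.\ every other eigenvalue satisfies $|\lambda|<\lambda_\beta$.
\end{itemize}
The conclusion I will use is exactly the cited statement $\lim_{n\to\infty}(G_\beta/\lambda_\beta)^n = h g^T/(g^T h)=hg^T$.

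For a self-contained argument, I would decompose $G_\beta = \lambda_\beta hg^T + R$ with $R = G_\beta - \lambda_\beta hg^T$. From $G_\beta h=\lambda_\beta h$, $g^TG_\beta=\lambda_\beta g^T$ and $g^Th=1$ one gets $hg^T R = 0 = R\,hg^T$, so the two parts commute and the cross terms vanish:
$$G_\beta^n = \lambda_\beta^n hg^T + R^n.$$
The spectrum of $R$ is the spectrum of $G_\beta$ with $\lambda_\beta$ replaced by $0$. This holds because $R$ kills $h$ and agrees with $G_\beta$ on the complementary invariant subspace $\ker g^T$. Hence $\rho(R)<\lambda_\beta$. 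By Gelfand's formula, for any $\epsilon>0$ we have $\|R^n\|\le C(\rho(R)+\epsilon)^n$. Choosing $\epsilon$ so that $\rho(R)+\epsilon<\lambda_\beta$ gives $R^n=o(\lambda_\beta^n)$, which proves the lemma.

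The only genuine obstacle is the strict spectral gap, which needs primitivity and not just irreducibility; this is where aperiodicity of $P$, and the fact that $G_\beta$ inherits $P$'s zero pattern, is essential. The remaining steps are routine linear algebra.
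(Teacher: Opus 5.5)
Your proposal is correct and follows the paper's proof, which likewise relies on primitivity of $G_\beta$ (inherited from the zero pattern of $P$) and the Perron--Frobenius limit $\lim_{n\to\infty} G_\beta^n/\lambda_\beta^n = hg^T$ with $g^T h = 1$. Your extra decomposition $G_\beta^n = \lambda_\beta^n hg^T + R^n$ with $\rho(R)<\lambda_\beta$ is a valid self-contained derivation of that limit, and it gives a stronger error bound $O((\rho(R)+\epsilon)^n)$ that decays geometrically relative to $\lambda_\beta^n$, rather than just $o(\lambda_\beta^n)$.
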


Finally, we provide the full version of Theorem~\ref{thm:p_convergence_informal}, relating the matrices $p^{(t)}$ arising from the optimization problem without the flow constraint (i.e.~\eqref{eqn:optimization_problem}) and the matrix $p$, arising from the optimization problem with the flow constraint (i.e.~\eqref{eqn:optimization_problem_with_flow}). We show that the matrices $p^{(t)}$ converge to $p$ as $m \to \infty$ in the following theorem, the proof of which uses Lemma~\ref{lem:G_beta_perron_frobenius} and is relegated to Appendix~\ref{appendix:p_relation}.
\begin{thm}
For $1 \le t \le m$ such that $\lim_{m \to \infty} (m - t) \to \infty$, we have
$\lim_{m \to \infty} p^{(t)} \to p.$
\label{thm:p_relation}
\end{thm}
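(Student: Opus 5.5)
The plan is to substitute the Perron--Frobenius asymptotics of Lemma~\ref{lem:G_beta_perron_frobenius} into the explicit formula for $p^{(t)}$ from Theorem~\ref{thm:q_simplified_expression}. That formula reads
\[
p^{(t)}_{ab} = (G_\beta)_{ab}\,\frac{h^{(t)}_b}{h^{(t-1)}_a}, \qquad h^{(t)} = G_\beta^{m-t}\on_{|\YY|}.
\]
Fix $a,b\in\YY$. Since $k:=m-t\to\infty$, Lemma~\ref{lem:G_beta_perron_frobenius} applies to both $G_\beta^{k}$ and $G_\beta^{k+1}$, giving
\[
h^{(t)} = \lambda_\beta^{k}\,(g^\top\on)\,h + o(\lambda_\beta^{k}), \qquad h^{(t-1)} = \lambda_\beta^{k+1}\,(g^\top\on)\,h + o(\lambda_\beta^{k+1}).
\]

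Taking the ratio gives
\[
\frac{h^{(t)}_b}{h^{(t-1)}_a} = \frac{\lambda_\beta^{k}\bigl((g^\top\on)h_b + o(1)\bigr)}{\lambda_\beta^{k+1}\bigl((g^\top\on)h_a+o(1)\bigr)} \longrightarrow \frac{h_b}{\lambda_\beta h_a},
\]
and multiplying by $(G_\beta)_{ab}$ yields exactly the matrix $p_{ab}$ of Theorem~\ref{thm:p_expression}. Note that $\beta$ here is fixed by the KL constraint \eqref{eqn:beta_value_choice}, so $\beta$ may depend on $m$. I would first argue the case of fixed $\beta$, then handle the dependence on $m$. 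One option is to show that $\beta=\beta_m$ converges to the value in Theorem~\ref{thm:p_expression}, using $\frac1m\dKL(q_\beta,Q)\to \dKL(p,P)$; the alternative is to interpret the statement, as the paper does, with $\beta$ held as a parameter. If $\beta_m$ does move, the final step also needs continuity of $\lambda_\beta,h,g$ in $\beta$, which follows from simplicity of the Perron eigenvalue. Once the correction terms are uniform, the limit does not depend on $t$, and Theorem~\ref{thm:p_convergence_informal} follows.

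The main obstacle is justifying that the denominators are nonzero and that the $o(\cdot)$ terms are negligible \emph{relative to each entry}.
\begin{itemize}
\item \textbf{Nonzero leading coefficient.} I need $g^\top\on>0$ and $h_a>0$. Because $P$ is irreducible and aperiodic, it is primitive, and $G_\beta$ has the same zero pattern with positive entries scaled by $\exp(\beta r_{ij})>0$. Hence $G_\beta$ is primitive, and Perron--Frobenius gives strictly positive $h$ and $g$, so $g^\top\on>0$.
\item \textbf{Uniform error.} Primitivity also gives the spectral gap: every other eigenvalue has modulus strictly less than $\lambda_\beta$. So the error in Lemma~\ref{lem:G_beta_perron_frobenius} is in fact $O(\rho^{k})$ with $\rho<\lambda_\beta$, and dividing by $\lambda_\beta^{k}$ makes it vanish uniformly over the finitely many entries $a,b$ as $k\to\infty$.
\end{itemize}
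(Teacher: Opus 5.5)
Your argument is the same as the paper's. You substitute the Perron--Frobenius expansion $G_\beta^{k}\on = \lambda_\beta^{k}(g^\top\on)h + o(\lambda_\beta^{k})$, for $k=m-t$ and $k=m-t+1$, into $p^{(t)}_{ab} = (G_\beta)_{ab}h^{(t)}_b/h^{(t-1)}_a$, and take the ratio to obtain $(G_\beta)_{ab}h_b/(\lambda_\beta h_a) = p_{ab}$. Your positivity checks ($g^\top\on>0$, $h_a>0$ via primitivity of $G_\beta$) make explicit what the paper uses silently. Your remark that $\beta$ may depend on $m$ flags an issue the paper sidesteps by treating $\beta$ as fixed; your sketched remedy, via $\beta_m\to\beta$ and continuity of the Perron data, is reasonable but not carried out.
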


\section{LDP for average reward}
\label{sec:LDP_average_reward}
In this section, we prove Theorem~\ref{thm:rate_kl_value}. Among the $m$ transitions in the $m + 1$ samples, let the random variable $\theta_{ij}$ denote the empirical fraction of transitions from $i$ to $j$, where $i,j\in \YY$, i.e.,
\begin{equation}
\theta_{ij} = \frac{\sum_{t = 1}^m \on_{Y_{t - 1} = i, Y_{t} = j}}{m},
\end{equation}
and let $\theta_i = \sum_{j \in \mathcal{Y}} \theta_{ij}$. Recall that $b_{ij} = \frac{\theta_{ij}}{\theta_i}$. Note that $\theta_j \neq \sum_{i \in \YY} \theta_{ij}$ in general.

The theorem below establishes an LDP for the empirical transitions probabilities $\theta:= \{\theta_{ij}\}$ (i.e. a \textit{Markovian Sanov's theorem}), where we note that we get a non-trivial rate only if $\theta_j = \sum_{i \in \YY} \theta_{ij},\ \forall j \in \YY$.
\begin{thm}[\citet{markov_LDP}, Section~3.1.3]
$\theta$ has an LDP with rate function:
\begin{equation}
    J(\theta) = \begin{cases}
        \dKL(b, P) &\text{ if }\theta_j = \sum_{i \in \YY} \theta_{ij},\ \forall j \in \YY, \\
        \infty, & \text{ otherwise}.
    \end{cases}
\end{equation}    
\label{thm:markov_ldp}
Further, $J(\theta)$ is a \textit{good} rate function.
\end{thm}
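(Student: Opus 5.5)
Since $\YY$ is finite, $\theta$ takes values in the compact simplex $\Delta_{|\YY|^2}$. I would prove the LDP by the Gärtner--Ellis theorem and then identify the resulting Legendre transform with $J$. The tools are the same Perron--Frobenius machinery used for $G_\beta$ in Lemma~\ref{lem:G_beta_perron_frobenius} and the tilting argument of Theorem~\ref{thm:p_expression}.

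\textbf{Step 1 (cumulant generating function).} Fix $V\in\RR^{|\YY|\times|\YY|}$ and set $(P_V)_{ij}=P_{ij}e^{V_{ij}}$. By Assumption~\ref{assump:Markovness},
\[
\EE\big[\exp(m\langle V,\theta\rangle)\big]=\pi^\top P_V^m\on_{|\YY|}.
\]
$P_V$ has the same zero pattern as $P$, so it is irreducible and aperiodic. Exactly as in Lemma~\ref{lem:G_beta_perron_frobenius}, $P_V^m=\rho(V)^m h_Vg_V^\top+o(\rho(V)^m)$ with $\pi^\top h_V>0$ and $g_V^\top\on>0$. Hence
\[
\Lambda(V):=\lim_{m\to\infty}\tfrac1m\log\EE\big[e^{m\langle V,\theta\rangle}\big]=\log\rho(V).
\]
This limit is finite everywhere. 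It is differentiable because the Perron root is a simple eigenvalue and depends analytically on the entries. Gärtner--Ellis therefore gives an LDP with good rate function $\Lambda^*(\theta)=\sup_V\{\langle V,\theta\rangle-\log\rho(V)\}$. Goodness also follows directly: $\Lambda^*$ is lower semicontinuous and its level sets lie in a compact simplex.

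\textbf{Step 2 ($\Lambda^*=\infty$ off the flow constraint).} Suppose $\theta_j\ne\sum_i\theta_{ij}$ for some $j$. Take $V_{ij}=c(u_i-u_j)$ for a vector $u$. Then $P_V=D P D^{-1}$ with $D=\diag(e^{cu})$, so $\rho(V)=1$. Meanwhile $\langle V,\theta\rangle=c\sum_k u_k(\theta_k-\sum_i\theta_{ik})$, which can be made arbitrarily large by choosing $u$ and $c$. Also, if $\theta$ charges some pair with $P_{ij}=0$, letting $V_{ij}\to\infty$ leaves $\rho$ unchanged and gives $\Lambda^*=\infty$. This is consistent with $\dKL(b,P)=\infty$.

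\textbf{Step 3 (identification on the flow-consistent set; main obstacle).} For $\theta$ with $\theta_j=\sum_i\theta_{ij}$, I must show $\Lambda^*(\theta)=\sum_{i,j}\theta_ib_{ij}\log(b_{ij}/P_{ij})$.

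For the upper bound, fix $V$ and let $h>0$ be the right Perron vector of $P_V$. Then $\tilde p_{ij}=(P_V)_{ij}h_j/(\rho h_i)$ is stochastic, exactly as in Theorem~\ref{thm:p_expression}. Rearranging gives
\[
V_{ij}=\log\rho+\log\frac{\tilde p_{ij}}{P_{ij}}+\log h_i-\log h_j.
\]
Summing against $\theta$, the $h$-terms cancel by the flow constraint. This yields
\[
\langle V,\theta\rangle-\log\rho=\sum\theta_ib_{ij}\log\frac{\tilde p_{ij}}{P_{ij}}\le\sum\theta_ib_{ij}\log\frac{b_{ij}}{P_{ij}},
\]
where the inequality is Gibbs' inequality applied row by row.

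For the lower bound, choose $V_{ij}=\log(b_{ij}/P_{ij})$ on the support of $\theta$ and let $V_{ij}\to-\infty$ elsewhere. Then $P_V\to b$ entrywise, which is stochastic and has Perron root $1$, giving equality. I expect the main obstacle to be here: when $b$ restricted to the support of $\theta$ is reducible, continuity of $\rho$ under the limit $V_{ij}\to-\infty$ needs care. I would handle it by approximating $\theta$ by flow-consistent $\theta^\varepsilon=(1-\varepsilon)\theta+\varepsilon\,\pi_iP_{ij}$, which has full support on the support of $P$. Then I would use lower semicontinuity and convexity of $\Lambda^*$ together with continuity of the KL expression along this segment.

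Finally, I would remark that the empirical $\theta$ satisfies the flow constraint only up to $O(1/m)$ boundary terms. This is harmless, since it is already accounted for in Step 2 through the limit.
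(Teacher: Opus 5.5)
Your proof is correct, and it takes a genuinely different route from the paper. The paper gives no argument for this statement; it imports it from the cited reference, where the standard proof goes through the empirical-measure LDP for the lifted pair chain $(Y_{t-1},Y_t)$ on $\{(i,j):P_{ij}>0\}$ and then identifies a variational rate with the relative-entropy formula. You instead apply G\"artner--Ellis directly to $\theta$, using the $|\YY|\times|\YY|$ twisted matrix $P_{ij}e^{V_{ij}}$. You rule out non-flow-consistent $\theta$ with the gauge choice $V_{ij}=c(u_i-u_j)$, which leaves $\rho$ unchanged by similarity. You then identify $\Lambda^*$ on the flow-consistent set via the $h$-transform $\tilde p$ and Gibbs' inequality. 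This is the same construction the paper uses in Theorem~\ref{thm:p_expression}: with $V=\beta r$ and $\theta_{ij}=\mu_ip_{ij}$, the equality case of your Step~3 identity is exactly the computation behind Lemma~\ref{lem:delta_lambda_relation}. The citation buys brevity and a more general statement. Your route is self-contained and in the paper's own notation. It also has a side benefit: since $\Lambda(\beta r)=\Lambda_\beta$, G\"artner--Ellis applied directly to $S_m=\langle r,\theta\rangle$ gives $\tilde J(s)=\sup_\beta\{\beta s-\Lambda_\beta\}$. Hence the convexity of $\tilde J$, which the paper proves separately inside the proof of Lemma~\ref{lem:T_m_convergence}, comes for free.

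One correction to your Step~3: the obstacle you anticipate does not arise. The spectral radius is continuous in the entries of any matrix, irreducible or not. The entrywise limit of your $P_V$ equals $b$ on rows with $\theta_i>0$ and is zero on the remaining rows. Flow-consistency gives $b_{ij}>0\Rightarrow\theta_j>0$, so this limit is a stochastic block plus a zero block, has spectral radius $1$, and $\rho(P_V)\to1$ directly. If you still use $\theta^\varepsilon$, the inequality you need comes from convexity: $\Lambda^*(\theta^\varepsilon)\le(1-\varepsilon)\Lambda^*(\theta)+\varepsilon\Lambda^*(\pi_iP_{ij})=(1-\varepsilon)\Lambda^*(\theta)$, where the last term vanishes by your upper bound. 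Combine this with $J(\theta^\varepsilon)\to J(\theta)$. Lower semicontinuity points the other way and only reproduces $\Lambda^*\le J$.
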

Noting that $S_m$ is a continuous function of $\{\theta_{ab}\}_{a, b \in \YY}$ and using the Contraction principle~\citep[Theorem 4.2.1]{LDP_contraction_principle} yields the desired result, the full proof of which is relegated to Appendix~\ref{appendix:rate_kl_value}. We remark that this procedure affords a simpler argument than the one used in~\citet{asymptotics_of_LM_alignment} for the setting of i.i.d. outputs as well.

The rate function obtained in Theorem~\ref{thm:rate_kl_value} is similar to the one in \eqref{eqn:optimization_problem_with_flow}, with the following important difference: the aim in~\eqref{eqn:optimization_problem_with_flow} is to maximize the expected reward subject to a KL constraint between the tilted and LM distribution, while in Theorem~\ref{thm:rate_kl_value}, the aim is to 
to minimize the KL divergence between the tilted and LM distributions for a given expected reward. Let $t_{\delta} \triangleq \max \{s: \tilde{J}(s) \le \delta \}$.

We show that $t_\delta$ equals $s_{\delta}$ in the following lemma, the proof of which is relegated to Appendix~\ref{appendix:s_t_equality}.
\begin{lem} For $s_{\delta}$ being the optimum value of the optimization problem in~\eqref{eqn:optimization_problem_with_flow},
$s_{\delta} = t_{\delta}$.
\label{lem:s_t_equality}
\end{lem}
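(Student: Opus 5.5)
To prove Lemma~\ref{lem:s_t_equality}, we will first rewrite both optimization problems over the same feasible set. Reparametrize \eqref{eqn:optimization_problem_with_flow} by the joint ``edge'' variables $\theta_{ij} = \mu_i p_{ij}$. Then $\mu^T p = \mu^T$ becomes $\sum_i \theta_{ij} = \theta_j$ with $\theta_j = \sum_k \theta_{jk}$. The KL term becomes $\dKL(p,P) = \sum_{i,j}\theta_i b_{ij}\log(b_{ij}/P_{ij})$ with $b_{ij}=p_{ij}=\theta_{ij}/\theta_i$, and the objective becomes $\sum_{i,j} r_{ij}\theta_i b_{ij}$. Write $F(\theta)$ for the reward functional and $D(\theta)$ for the KL functional on the compact convex set $\Theta$ of flow-balanced probability vectors on $\YY\times\YY$. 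We then have
\[
s_\delta = \max\{F(\theta):\theta\in\Theta,\ D(\theta)\le\delta\}, \qquad \tilde J(s)=\min\{D(\theta):\theta\in\Theta,\ F(\theta)=s\}.
\]
One technical point needs attention: states with $\theta_i=0$, where $b_{i\cdot}$ is undefined. We will handle these by the usual convention $0\log 0 = 0$, which makes $D(\theta)=\sum_{i,j}\theta_{ij}\log\frac{\theta_{ij}}{\theta_i P_{ij}}$. This form is jointly convex and lower semicontinuous in $\theta$ as a relative-entropy-type (perspective) function, while $F$ is linear.

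\textbf{Step 1 ($t_\delta\ge s_\delta$).} Let $\theta^*$ attain $s_\delta$; it exists by compactness and lower semicontinuity of $D$. Then $F(\theta^*)=s_\delta$ and $D(\theta^*)\le\delta$, so $\tilde J(s_\delta)\le D(\theta^*)\le \delta$. Since $t_\delta$ is the largest $s$ with $\tilde J(s)\le\delta$, this gives $t_\delta\ge s_\delta$.

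\textbf{Step 2 ($t_\delta\le s_\delta$).} For any $s$ with $\tilde J(s)\le\delta$, the minimum defining $\tilde J(s)$ is attained, again by compactness and lower semicontinuity (goodness of $\tilde J$ from Theorem~\ref{thm:rate_kl_value}). So some $\theta\in\Theta$ satisfies $F(\theta)=s$ and $D(\theta)\le\delta$. This $\theta$ is feasible for the $s_\delta$ problem, hence $s\le s_\delta$. Taking the supremum over such $s$ gives $t_\delta\le s_\delta$.

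\textbf{Main obstacle: well-definedness of $t_\delta$.} We must check that the set $\{s:\tilde J(s)\le\delta\}$ is nonempty, closed and bounded, so that the maximum exists. It is nonempty because $\theta_{ij}=\pi_iP_{ij}$ gives $D=0$. It is bounded because $F$ is bounded on the compact set $\Theta$. It is closed because $\tilde J$ is lower semicontinuous, being a good rate function. Beyond this, the only delicate point is verifying that the change of variables $(\mu,p)\leftrightarrow\theta$ is onto $\Theta$ when some $\mu_i=0$. For such rows $p_{i\cdot}$ is arbitrary and contributes nothing to either functional, so we may set $p_{i\cdot}=P_{i\cdot}$, and the two formulations coincide. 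If one prefers to avoid these boundary issues entirely, an alternative route is convex duality: $\tilde J$ is convex and nondecreasing to the right of the mean reward, so $\tilde J(s_\delta)=\delta$ via the Lagrangian/Perron characterization of Theorems~\ref{thm:p_expression} and~\ref{thm:s_delta_value}. The direct argument above, however, suffices.
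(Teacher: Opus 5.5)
Your proof is correct and follows the paper's argument. The paper likewise identifies $(\theta_i, b)$ with $(\mu, p)$ and proves $s_\delta\le t_\delta$ and $t_\delta\le s_\delta$ by the same two feasibility transfers; your compactness and lower-semicontinuity remarks additionally make explicit that the minimum in $\tilde J(s)$ is attained and that the maximum defining $t_\delta$ exists, both of which the paper takes for granted. The closing duality aside is not needed, and the claim $\tilde J(s_\delta)=\delta$ in it can fail (e.g.\ when $\delta$ exceeds the divergence of some reward-maximizing chain), but nothing in your main argument depends on it.
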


\section{Connecting the KL-constrained RL and \bestofn{} solutions}
\label{sec:connection_KL_best-of-n}

In this section, we prove Theorem~\ref{thm:KL-constrained_best-of-n}. First, we use~\eqref{eqn:Q_definition} and the denominator from~\eqref{eqn:numerator_expression} to obtain
\begin{multline}
\label{eq:temp}
\dKL(\qbest,q) = \EE_{Y^{m+1}\sim\qbest} \left[ \log\frac{\qbest(Y^{m+1})}
              {q_\beta(Y^{m+1})} \right] \\
\shoveleft{= \EE_{\qbest} \left[\log\frac{\qbest(Y^{m+1})}{Q(Y^{m+1})} - \log\frac{q_\beta(Y^{m+1})}{Q(Y^{m+1})} \right]}
\\ \shoveleft{= \dKL(\qbest,Q) - \beta \EE_{\qbest} \left[r(Y^{m+1})\right]} \\
- \log\left(\pi^\top(G_\beta)^m\on_{|\YY|} \right).
\end{multline}
Note that the first term on the right-hand side is $\dKL(\qbest, Q) \le \log (n) - \frac{n-1}{n}$ from~\citet[Theorem 3.1]{best_of_n_policy}, which we recall is at most $m \delta$ since $n = \exp(m \delta)$. To analyze the third term in the right-hand side above, we use Lemma~\ref{lem:G_beta_perron_frobenius} to get
\begin{multline*}
    \lim_{m \to \infty} \frac{1}{m} \log\left(\pi^\top (G_\beta)^m \on_{|\YY|} \right) \\
    = \lim_{m \to \infty} \frac{\log \left(\lambda_\beta^m \cdot \left(\pi^\top h g^\top \on_{|\YY|} + o(1) \right) \right) }{m} = \log(\lambda_\beta).
\end{multline*}

It is left to analyze the second term in \eqref{eq:temp}, i.e. $\EE_{Z^{m+1} \sim \qbest} \left[r(Z^{m+1})\right]$. Let $T_{m, n}(Z^{m+1}) = \frac 1m r(Z^{m+1})$, where $Z^{m+1} \sim \qbest$. Recall that we have an LDP for $S_m(Y^{m+1})$ from Theorem~\ref{thm:rate_kl_value}, where $S_m$ is the average reward when $Y^{m+1}$ follows the distribution $Q$. However, \eqref{eq:temp} has the expectation in terms of the distribution $\qbest$ and thus it is necessary to connect the expected value of the average reward under $Q$ to that under $\qbest$.

By the definition of \bestofn{}, note that the reward $T_{m, n}$ is the maximum of the rewards of $n$ i.i.d.~reward samples obtained from $Q$. Formally, let $S_m^{(1)}, S_m^{(2)}, \dots S_m^{(n)}$ be $n$ i.i.d. copies of $S_m$. Then,
\begin{equation}
    T_{m, n} = \max_{i = 1}^n S_m^{(i)}.
    \label{eqn:T_definition}
\end{equation}
We are interested in $\EE[T_{m, n}]$ when $n = \exp(m \delta)$. First, we consider the random variable $T_{m, \exp(m \delta)}$ and establish its convergence in probability to $s_{\delta}$.
\begin{lem}
We have
$T_{m, \exp(m \delta)} \xrightarrow{\mathbb{P}} s_{\delta}$.  
\label{lem:T_m_convergence}
\end{lem}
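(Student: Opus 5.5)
The plan is to show, for every $\epsilon>0$, that $\PP(T_{m,n} > s_\delta+\epsilon)\to 0$ and $\PP(T_{m,n} < s_\delta-\epsilon)\to 0$ with $n=\exp(m\delta)$. Both bounds come from the LDP for $S_m$ in Theorem~\ref{thm:rate_kl_value}, combined with the characterization $s_\delta = t_\delta = \max\{s:\tilde J(s)\le\delta\}$ from Lemma~\ref{lem:s_t_equality}. Since $T_{m,n}$ is the maximum of $n$ i.i.d.\ copies of $S_m$, we have
\[
\PP(T_{m,n}\le s)=\PP(S_m\le s)^n .
\]
So the whole question reduces to comparing the tail $\PP(S_m> s)$ with $e^{-m\delta}$.

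\textbf{Upper tail.} By a union bound,
\[
\PP(T_{m,n}>s_\delta+\epsilon)\le n\,\PP(S_m\ge s_\delta+\epsilon).
\]
The LDP upper bound on the closed set $[s_\delta+\epsilon,\infty)$ gives
\[
\PP(S_m\ge s_\delta+\epsilon)\le \exp\Bigl(-m\bigl(\inf_{s\ge s_\delta+\epsilon}\tilde J(s)-\eta\bigr)\Bigr)
\]
for any $\eta>0$ and all large $m$. The key claim is that $\inf_{s\ge s_\delta+\epsilon}\tilde J(s)>\delta$. To see this, note that $\tilde J$ is convex: it is a minimization of a jointly convex objective, written in the variables $\theta_{ij}$ via the perspective form $\theta_{ij}\log\frac{\theta_{ij}}{\theta_iP_{ij}}$, over constraints that are linear in $s$. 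It is also nondecreasing to the right of the mean reward $\sum_{ij}\pi_iP_{ij}r_{ij}$, where it vanishes. By the definition of $t_\delta$, every $s>t_\delta$ has $\tilde J(s)>\delta$. Goodness (lower semicontinuity with compact level sets) together with monotonicity then gives the strict gap $\tilde J(s_\delta+\epsilon)=\delta+\gamma$ for some $\gamma>0$. Choosing $\eta<\gamma$, we get $n\,\PP(\cdot)\le e^{m\delta}e^{-m(\delta+\gamma-\eta)}\to0$.

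\textbf{Lower tail.} Here
\[
\PP(T_{m,n}<s_\delta-\epsilon)\le\bigl(1-\PP(S_m> s_\delta-\epsilon)\bigr)^n\le\exp\bigl(-n\,\PP(S_m>s_\delta-\epsilon)\bigr).
\]
The LDP lower bound on the open set $(s_\delta-\epsilon,\infty)$ gives $\PP(S_m>s_\delta-\epsilon)\ge \exp(-m(\tilde J(s')+\eta))$ for any $s'\in(s_\delta-\epsilon,s_\delta)$. By convexity, with $\tilde J(s_\delta)\le\delta$ and $\tilde J$ equal to $0$ at the mean, we have $\tilde J(s')<\delta$ strictly, say $\tilde J(s')=\delta-\gamma'$. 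This assumes $s_\delta$ exceeds the mean reward and that $\delta$ lies below the supremum of $\tilde J$ on the effective domain, which I would state as a nondegeneracy condition. Then $n\,\PP(\cdot)\ge e^{m(\gamma'-\eta)}\to\infty$, so the probability tends to $0$.

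\textbf{Main obstacle.} The delicate step is the strict inequalities $\tilde J(s_\delta+\epsilon)>\delta$ and $\tilde J(s')<\delta$. These require convexity and continuity of $\tilde J$ on the interior of its effective domain, since the LDP bounds only control infima over sets. I would first establish convexity via the joint convexity of the relative-entropy perspective function. If that becomes cumbersome, the fallback is to identify $\tilde J$ as the Legendre transform of $\Lambda_\beta=\log\lambda_\beta$ (Gärtner--Ellis, using Lemma~\ref{lem:G_beta_perron_frobenius}). That identification gives convexity immediately, and together with Theorem~\ref{thm:s_delta_value} it gives strict monotonicity on $(\Lambda'_0,\Lambda'_\infty)$.
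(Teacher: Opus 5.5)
Your proposal is correct and follows essentially the paper's route: a union bound with the LDP upper bound for the upper tail, and $(1-p)^n\le e^{-np}$ with the LDP lower bound on $(s_\delta-\epsilon,\infty)$ and convexity of $\tilde J$ (which vanishes at the mean $s_0$) for the lower tail. The nondegeneracy condition you add is unnecessary, and this is exactly how the paper's case split handles it: if $\tilde J(s_\delta)<\delta$ (which covers both $s_\delta=s_0$ and $\delta$ exceeding $\sup\tilde J$), take $s'=s_\delta$, which already lies in the open set $(s_\delta-\epsilon,\infty)$; otherwise $\tilde J(s_\delta)=\delta>0$ forces $s_\delta>s_0$, and convexity gives $\tilde J(s')<\delta$ for any $s'\in(s_\delta-\epsilon,s_\delta)$ with $s'\ge s_0$.
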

The proof of this theorem uses the LDP rate function from Theorem~\ref{thm:rate_kl_value} and is relegated to Appendix~\ref{appendix:T_m_convergence}.

The following corollary on the expected value of $T_{m, \exp(m \delta)}$ holds since $T_{m, \exp(m \delta)}$ is bounded.

\begin{cor}
We have
$
    \EE[T_{m, \exp(m \delta)}] \xrightarrow{} s_{\delta}.
$
\end{cor}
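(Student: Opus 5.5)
The plan is to deduce the corollary from Lemma~\ref{lem:T_m_convergence}. Convergence in probability does not by itself give convergence of expectations, but it does once the family is uniformly bounded; bounded convergence in the version for convergence in probability (or, equivalently, uniform integrability) then finishes the argument.

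\emph{Step 1: a deterministic bound.} Let $r_{\max} = \max_{a,b \in \YY} r_{ab}$, which is finite since $\YY$ is finite. By Assumption~\ref{assump:reward_decomposition} the rewards are positive, so for every $y^{m+1}$ we have $0 \le \frac1m r(y^{m+1}) = \frac1m \sum_{i=1}^m r_{y_{i-1}y_i} \le r_{\max}$. By~\eqref{eqn:T_definition}, $T_{m,n}$ is a maximum of such averages. Hence $0 \le T_{m,\exp(m\delta)} \le r_{\max}$ for all $m$, uniformly in $m$ and in $n$.

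The limit is also in this range: $s_\delta$ is a weighted average of the $r_{ab}$ with weights $\mu_a p_{ab}$ summing to one, so $0 \le s_\delta \le r_{\max}$.

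\emph{Step 2: split the expectation.} Fix $\epsilon > 0$ and write
\[
\bigl|\EE[T_{m,\exp(m\delta)}] - s_\delta\bigr| \le \EE\bigl|T_{m,\exp(m\delta)} - s_\delta\bigr| \le \epsilon + 2 r_{\max}\, \PP\bigl(|T_{m,\exp(m\delta)} - s_\delta| > \epsilon\bigr).
\]
On the event $\{|T_{m,\exp(m\delta)} - s_\delta| \le \epsilon\}$ the integrand is at most $\epsilon$. On its complement the integrand is at most $2r_{\max}$, using Step~1 for both $T_{m,\exp(m\delta)}$ and $s_\delta$.

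By Lemma~\ref{lem:T_m_convergence}, the probability on the right tends to $0$ as $m \to \infty$. Hence $\limsup_m \bigl|\EE[T_{m,\exp(m\delta)}] - s_\delta\bigr| \le \epsilon$, and letting $\epsilon \downarrow 0$ gives the claim.

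\emph{Main obstacle.} There is essentially no real obstacle here. The only points needing care are that $r_{\max}$ is finite, which uses finiteness of $\YY$, and that the bound does not depend on $n = \exp(m\delta)$. The substantive work lies in Lemma~\ref{lem:T_m_convergence}, which is assumed.

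If $n = \exp(m\delta)$ is not an integer, I would read it as $\lceil \exp(m\delta)\rceil$. This affects neither the bound in Step~1 nor the use of the lemma.
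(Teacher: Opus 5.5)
Your proposal is correct and follows the paper's argument. The paper justifies the corollary in one line: $T_{m,\exp(m\delta)}$ is bounded, so the convergence in probability from Lemma~\ref{lem:T_m_convergence} upgrades to convergence of expectations, and your Steps 1--2 simply make the bound $0 \le T_{m,\exp(m\delta)} \le r_{\max}$ and the bounded-convergence step explicit.
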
    

Now, we have shown that
\begin{equation}
    \lim_{m \to \infty} \frac 1m \dKL(\qbest, q) = \delta - \beta s_{\delta} + \log(\lambda_{\beta}).
\end{equation}
It is left to show that this expression is zero, which we do in the following lemma, the proof of which is algebraic and is relegated to Appendix~\ref{appendix:delta_lambda_relation}. Recall from Theorem~\ref{thm:s_delta_value} that $\Lambda_{\beta} = \log(\lambda_{\beta})$ and $s_{\delta} = \Lambda'_{\beta}$.
\begin{lem}
We have
$
    \delta - \beta \Lambda'_{\beta} + \Lambda_{\beta} = 0.
$
\label{lem:delta_lambda_relation}
\end{lem}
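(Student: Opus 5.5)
The plan is to compute $\dKL(p, P)$ explicitly for the optimizer $p$ of Theorem~\ref{thm:p_expression}, and then use the defining condition $\dKL(p, P) = \delta$. That condition is the normalization of $\beta$ in Theorem~\ref{thm:p_expression}, and it is the $\delta$ that survives in the limit expression $\delta - \beta s_\delta + \log(\lambda_\beta)$. We have $p_{ab} = (G_\beta)_{ab} h_b/(\lambda_\beta h_a)$ with $(G_\beta)_{ab} = P_{ab}e^{\beta r_{ab}}$. Hence
\begin{equation*}
\log\frac{p_{ab}}{P_{ab}} = \beta r_{ab} + \log h_b - \log h_a - \Lambda_\beta .
\end{equation*}

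The next step is to average this identity against $\mu_a p_{ab}$, where $\mu$ is the stationary distribution of $p$. This gives
\begin{equation*}
\dKL(p,P) = \beta \sum_{a,b}\mu_a p_{ab} r_{ab} + \sum_{a,b}\mu_a p_{ab}\log h_b - \sum_{a}\mu_a\log h_a - \Lambda_\beta ,
\end{equation*}
where we use $\sum_b p_{ab}=1$ in the third term and $\sum_{a,b}\mu_a p_{ab}=1$ in the last. By the flow constraint $\mu^\top p = \mu^\top$, we have $\sum_a \mu_a p_{ab} = \mu_b$. The two $\log h$ terms are then equal and cancel; this telescoping is the key step. Note that $h$ is strictly positive by Perron--Frobenius, since $G_\beta$ inherits irreducibility from $P$, so these logarithms are well defined.

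What remains is $\dKL(p,P) = \beta \sum_{a,b}\mu_a p_{ab} r_{ab} - \Lambda_\beta$. The sum $\sum_{a,b}\mu_a p_{ab} r_{ab}$ is exactly the objective of \eqref{eqn:optimization_problem_with_flow} at its optimizer, i.e.\ $s_\delta$. By Theorem~\ref{thm:s_delta_value}, $s_\delta = \Lambda'_\beta$. Substituting $\dKL(p,P)=\delta$ gives $\delta = \beta\Lambda'_\beta - \Lambda_\beta$, which rearranges to the claim.

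I expect the main obstacle to be bookkeeping about which $\beta$ and which $\delta$ are meant. The $\beta$ in $q_\beta$ of Theorem~\ref{thm:q_expression} was fixed by $\dKL(q_\beta,Q)=m\delta$, whereas the one in Theorem~\ref{thm:p_expression} is fixed by $\dKL(p,P)=\delta$. The argument implicitly identifies the two in the limit, and I would take that identification, as the paper does, from Theorem~\ref{thm:p_relation} and the neglected $o(1)$ term.
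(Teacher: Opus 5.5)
Your proof is correct and matches the paper's argument step for step: expand $\log(p_{ab}/P_{ab})$ using the Perron form of $p$, average against $\mu_a p_{ab}$, cancel the $\log h$ terms via stationarity to get $\dKL(p,P)=\beta s_\delta-\Lambda_\beta$, and then use $s_\delta=\Lambda'_\beta$ and $\dKL(p,P)=\delta$. The $\beta$-identification issue you flag at the end is not needed for the lemma itself, which concerns only the $\beta$ of Theorem~\ref{thm:p_expression}; it matters only when the lemma is substituted into the limit expression in the proof of Theorem~\ref{thm:KL-constrained_best-of-n}.
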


\section{Closeness of \bestofn{} and the KL-RL distribution for $m=1$}
\label{sec:small_m_connection_KL_best-of-n}
We first state some measure theoretic assumptions.  Assume that $\YY$ is a Borel space, equipped with its Borel $\sigma$-algebra $\BB(\YY)$, so that the conditional distributions given $L=r(Y)$ exist. Recall that $Q$ denotes the law of $Y$, and that $q_\beta$ denotes the tilted distribution satisfying the following Radon--Nikyodym derivative:
\begin{equation}
\frac{dq_\beta}{dQ}(y) = \frac{e^{\beta r(y)}}{A_\beta}, \label{eqn:rn_tilted}
\end{equation}
where $A_\beta$ is a suitable normalization constant. Assume that $r: \YY \to \RR$ is Borel measurable. Let
$p_L$ and $p_{L'}$ denote the laws of $L=r(Y)$ under $Q$ and $q_\beta$, respectively and let the support of $L$ be $\LL$.

We note the following lemma, where we use that the `tilt' (i.e. $\exp(\beta r(y))$ term) in $q_\beta$ depends only on $r(y)$ and the proof of which is relegated to Appendix~\ref{appendix:borel_change}. 
\begin{lem} For $B \in \BB (\mathcal Y)$, for $p_L'$-almost every $l$,
\begin{equation}
q_{\beta} (Y' \in B | L' = l) = Q (Y \in B | L = l)
\end{equation}
\label{lem:borel_change}
\end{lem}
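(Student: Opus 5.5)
The plan is to verify the identity directly from the definition of conditional probability. For $B\in\BB(\YY)$ and a Borel set $C\subseteq\RR$, I will compute the joint quantity $q_\beta(Y'\in B, L'\in C)$ using the Radon--Nikodym derivative \eqref{eqn:rn_tilted}. Since $L'=r(Y')$, the event $\{L'\in C\}$ is the same as $\{Y'\in r^{-1}(C)\}$. This gives
\begin{equation*}
q_\beta(Y'\in B, L'\in C)=\frac{1}{A_\beta}\int_{B\cap r^{-1}(C)} e^{\beta r(y)}\,dQ(y)=\frac{1}{A_\beta}\EE_Q\!\left[\on_{\{Y\in B\}}\on_{\{L\in C\}}e^{\beta L}\right].
\end{equation*}
The key observation is that the tilt $e^{\beta L}$ is $\sigma(L)$-measurable. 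I will therefore condition on $L$ under $Q$ and pull the tilt out of the inner conditional expectation. This rewrites the right-hand side as
\begin{equation*}
\frac{1}{A_\beta}\int_C e^{\beta l}\,Q(Y\in B\mid L=l)\,dp_L(l).
\end{equation*}

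Taking $B=\YY$ shows that $dp_{L'}(l)=\frac{e^{\beta l}}{A_\beta}\,dp_L(l)$. In words, the law of $L'$ is the exponential tilt of the law of $L$, and in particular $p_{L'}\ll p_L$. Substituting this back, I will get, for every Borel set $C$,
\begin{equation*}
q_\beta(Y'\in B, L'\in C)=\int_C Q(Y\in B\mid L=l)\,dp_{L'}(l).
\end{equation*}
By the defining property of the regular conditional distribution, whose existence is guaranteed by the Borel-space assumption, this says that $l\mapsto Q(Y\in B\mid L=l)$ is a version of $q_\beta(Y'\in B\mid L'=l)$. The identity is then immediate by uniqueness of conditional probabilities up to $p_{L'}$-null sets.

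The step needing the most care is this final almost-everywhere statement. Versions of $Q(\cdot\mid L=l)$ are defined only up to $p_L$-null sets, so I must make sure such ambiguity does not become non-null under $p_{L'}$. This is exactly what the absolute continuity $p_{L'}\ll p_L$ established above guarantees, since the density $e^{\beta l}/A_\beta$ is finite and positive. Finally, if a statement simultaneously for all $B$ is desired, I will restrict to a countable generating $\pi$-system of $\BB(\YY)$, which exists because $\YY$ is a Borel space, and extend by the $\pi$--$\lambda$ theorem.
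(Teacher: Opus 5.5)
Your proof is correct and follows the paper's route through the tower-property computation and the choice $B=\YY$, which gives $\frac{dp_{L'}}{dp_L}(l)=\frac{e^{\beta l}}{A_\beta}$. The only difference is the last step. The paper finishes with the pointwise ratio $q_\beta(Y'\in B, L'=l)/q_\beta(L'=l)$, which makes sense only at atoms of $L$, and so not in the continuous-CDF setting where the lemma is later used. You instead substitute the density back to get $q_\beta(Y'\in B, L'\in C)=\int_C Q(Y\in B\mid L=l)\,dp_{L'}(l)$ for every Borel $C$, and then conclude by a.e.\ uniqueness of conditional probabilities together with $p_{L'}\ll p_L$; this is the rigorous form of that step.
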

\vspace{-0.6cm}
We note the corollary below about the KL divergences, which follows by using the chain rule for KL divergence \citep[Theorem 2.5.3]{elements_of_information_theory}.
\begin{cor}
We have that $\dKL(p_{L'}, p_{L}) = \dKL(q_\beta, Q)$.  
\label{corollary:KL_divergence}
\end{cor}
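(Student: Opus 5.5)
The plan is to apply the chain rule for KL divergence to the joint law of $(Y, L)$, once under $q_\beta$ and once under $Q$. Since $L = r(Y)$ is a deterministic function of $Y$, the joint law of $(Y, r(Y))$ is the pushforward of the law of $Y$ under the injective map $y \mapsto (y, r(y))$. KL divergence is invariant under such a measurable injection. Hence
\[
\dKL(q_\beta, Q) = \dKL\bigl(\mathrm{Law}_{q_\beta}(Y', L'), \mathrm{Law}_{Q}(Y, L)\bigr).
\]
We then disintegrate both joint laws along the second coordinate. This uses the Borel-space assumption on $\YY$, which guarantees regular conditional distributions exist.

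Next we invoke the chain rule \citep[Theorem 2.5.3]{elements_of_information_theory}, in its general measure-theoretic form for regular conditional probabilities:
\[
\dKL\bigl(\mathrm{Law}(Y',L'), \mathrm{Law}(Y,L)\bigr) = \dKL(p_{L'}, p_L) + \int \dKL\bigl(q_\beta(Y' \in \cdot \mid L'=l),\, Q(Y \in \cdot \mid L=l)\bigr)\, dp_{L'}(l).
\]
By Lemma~\ref{lem:borel_change}, the two conditional laws coincide for $p_{L'}$-almost every $l$. The integrand therefore vanishes $p_{L'}$-a.e., so the integral is zero, and this gives the claim.

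Before applying the chain rule we must check absolute continuity, so that both sides are well defined. By \eqref{eqn:rn_tilted}, $q_\beta \ll Q$ with density $e^{\beta r(y)}/A_\beta$. Consequently $p_{L'} \ll p_L$, with $\frac{dp_{L'}}{dp_L}(l) = e^{\beta l}/A_\beta$.

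There is also a more direct route that serves as a sanity check. Because the Radon--Nikodym derivative in \eqref{eqn:rn_tilted} factors through $r$, we can compute
\[
\dKL(q_\beta, Q) = \EE_{q_\beta}\!\left[\beta r(Y) - \log A_\beta\right] = \EE_{p_{L'}}\!\left[\beta L' - \log A_\beta\right] = \dKL(p_{L'}, p_L),
\]
using the density of $p_{L'}$ relative to $p_L$ found above.

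The main technical point is justifying the chain rule in the general (possibly continuous) Borel setting rather than the discrete one. The direct computation above sidesteps this, so I would present it as the primary argument if rigor about disintegration becomes delicate.
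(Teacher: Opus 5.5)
Your proposal is correct and matches the paper's argument: the paper obtains this corollary from the chain rule for KL divergence applied to $(Y,L)$, with Lemma~\ref{lem:borel_change} making the conditional term vanish, exactly as in your main route. Your direct computation is also valid: because $\frac{dq_\beta}{dQ}$ factors through $r$, both divergences equal $\EE_{p_{L'}}[\beta L' - \log A_\beta]$, and this has the advantage of bypassing the disintegration subtleties entirely.
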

This corollary is useful in proving Theorem~\ref{thm:continuous_KL_separation}.

We will assume that $L$ has a continuous CDF $P_L(l), l \in \LL$ for simplicity\footnote{the result is true even without this assumption}. Recall the definitions of $F$ and $F^-$ for the \bestofn{} policy from~\citet{best_of_n_policy}:
\begin{align}
F(y) &= \PP_{Z \sim Q} (r(Z)\le r(y)), \quad y \in \YY \label{eqn:F_definition}\\
F^-(y) &= \PP_{Z \sim Q} (r(Z)<r(y)), \quad y \in \YY.
\end{align}
In our notation, $F(y) = P_L(r(y)),\ y\in \YY$. In~\citet[Lemma 2.3]{best_of_n_policy}, the probability mass function of the \bestofn{} policy was given to be $\qbestn(y) = (F(y))^n - (F^-(y))^n$. We next give the continuous analogue of this equation in the following lemma, the proof of which is relegated to Appendix~\ref{appendix:rn_best_of_n}. 
\begin{lem}
$\qbestn$ is absolutely continuous with respect to $Q$, with the Radon--Nikodym derivative $\frac{d \qbestn}{dQ}(y) = n P_L(r(y))^{n-1}$.
\label{lem:rn_best_of_n}
\end{lem}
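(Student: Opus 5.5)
The plan is to prove the claim by computing $\qbestn(B)$ for an arbitrary Borel set $B \in \BB(\YY)$ and showing that it equals $\int_B n P_L(r(y))^{n-1}\, dQ(y)$. Absolute continuity with respect to $Q$ is then immediate, and the density identity follows from uniqueness of the Radon--Nikodym derivative. Let $Y^{(1)},\dots,Y^{(n)}$ be i.i.d.\ draws from $Q$ with rewards $L_i = r(Y^{(i)})$. Since $P_L$ is continuous, $\PP(L_i = L_j) = 0$ for $i \neq j$, so the maximizer $K^*$ in~\eqref{eqn:best-of-n_definition} is almost surely unique.

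We decompose over the index of the winner:
\begin{equation*}
\qbestn(B) = \sum_{k=1}^n \PP\bigl(Y^{(k)} \in B,\ L_j < L_k\ \forall j \neq k\bigr) = n\, \PP\bigl(Y^{(1)} \in B,\ L_j < L_1\ \forall j \ge 2\bigr),
\end{equation*}
where the second equality uses exchangeability. Next we condition on $Y^{(1)} = y$. The remaining samples are independent of $Y^{(1)}$, so the conditional probability equals $\PP_{Z\sim Q}(r(Z) < r(y))^{n-1} = F^-(y)^{n-1}$. By Fubini, this gives $\qbestn(B) = \int_B n F^-(y)^{n-1}\, dQ(y)$. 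Continuity of $P_L$ gives $F^-(y) = F(y) = P_L(r(y))$ for every $y$, because $\PP(L = r(y)) = 0$. This yields the stated derivative $n P_L(r(y))^{n-1}$.

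The main point requiring care is the tie-handling and the conditioning step. We need the event $\{K^* = k\}$ to be measurable and the ties to be null, which follows from continuity of $P_L$ together with independence. We also need Fubini applied to the product measure $Q^{\otimes n}$, using the measurability of $r$. Finally, nonnegativity and integrability are clear: $\int n P_L(r(y))^{n-1} dQ = \EE[n U^{n-1}] = 1$, where $U = P_L(L)$ is uniform by the probability integral transform. This serves as a consistency check that the measure has total mass one.
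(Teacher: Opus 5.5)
Your proposal is correct and follows essentially the same route as the paper: decompose by the winning index, use exchangeability to reduce to $n\,\PP(Y^{(1)}\in B,\ L_j<L_1\ \forall j\ge 2)$, condition on the first sample, and use independence to obtain $n\,\EE_Q[\on_{\{Y\in B\}}P_L(r(Y))^{n-1}]$. You also note that the conditional probability is really $F^-(y)^{n-1}$, which equals $P_L(r(y))^{n-1}$ only because $P_L$ is continuous; this makes precise a step the paper passes over silently.
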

Finally, we note that the KL--divergence between two distributions if zero if and only if they are equal almost surely and use the expressions for the \bestofn{} distribution from Lemma~\ref{lem:rn_best_of_n} and the tilted distribution from~\eqref{eqn:rn_tilted} to obtain Theorem~\ref{thm:continuous_KL_separation}, the proof of which is relegated to Appendix~\ref{appendix:continuous_KL_separation}.

\begin{remark} It appears that the reward function in Theorem~\ref{thm:continuous_KL_separation} is dependent on $\beta$ but the expressions for both $\qbestn$ and $q_\beta$ do not depend on $\beta$. This is since the value of $\beta$ and $r(y)$ are picked jointly, such that the effect of $\beta$ ``cancels'' in $q_\beta$.
\end{remark}

\subsection{Discrete Case} 
\textbf{Best-of-$n$ distribution:} Let \(y_i=i/d\) for $i \in [d]$. Note that the expression for $r(y)$ from Theorem~\ref{thm:continuous_KL_separation} is a strictly increasing in $y$. From this, 
\begin{align}
    \qbestn(y) &= y^n - \left(y - \frac 1d \right)^n, \ y \in \YY_d \\
    \qbestn(y_i) &= \frac{i^n-(i-1)^n}{d^n},\ i \in [d].
\end{align}

\textbf{Tilted Distribution:} We apply the definition of the tilted distribution  to obtain
\begin{align}
    q_\beta(y) & \propto 1 \cdot \exp(\beta r(y)) = \exp(\beta (b + \frac{(n - 1)}{\beta} \log(y))) \nonumber \\
    & \propto \exp(((n - 1)\log(y))) = y^{n - 1},\  y \in \YY_d, \\
    q_\beta(y_i) &= \frac{(i/d)^{n-1}}{\sum_{j=1}^d (j/d)^{n-1}} = \frac{i^{n-1}}{S_{d,n-1}},\ i \in [d],
\end{align}
where $S_{d,n-1} \triangleq \sum_{j=1}^d j^{n-1}$.

Note that $q_\beta$ in this case is actually independent of $\beta$. We verify below that the constraint $\dKL(q_\beta, Q) \le \delta$ is satisfied, the proof of which is in Appendix~\ref{appendix:dKL_discrete_Q}.
\begin{lem}
We have $\dKL(q_\beta, Q) \le \log(n) - \frac{n-1}{n} = \delta$.
\label{lem:dKL_discrete_Q}
\end{lem}
Finally, we compute 
\begin{multline*}
\dKL(\qbestn, q_\beta) \\= \sum_{i \in [d]} \frac{i^n-(i-1)^n}{d^n} \log \left(\frac{i^n-(i-1)^n}{d^n} \cdot \frac{S_{d,n-1}}{i^{n-1}} \right).
\end{multline*}
This is the exact finite-\(d\) expression, but we want an approximation of this to glean insights about the dependence on $d$ and $n$. Note that the numerator has an $i^{n - 1}$ term while the denominator has a term which can be Taylor-expanded to an $i^{n - 1}$ order term. We perform these expansions and cancel out corresponding terms to get Theorem~\ref{thm:discrete_KL_separation}, the full proof of which is in Appendix~\ref{appendix:discrete_KL_separation}.

\section{Conclusions}

\label{sec:conclusions}
In this work, we studied two popular Language Model (LM) alignment methods: KL-constrained RL, which, although optimal in terms of expected reward, requires full knowledge of the LM distribution, and \bestofn{}, which is a computationally inexpensive method widely adopted in practice. Assuming Markov LM outputs and local decomposition of rewards, we showed that the two aligned distributions are asymptotically close, as the length of the outputs increases to infinity. This answers an extension raised in~\cite{asymptotics_of_LM_alignment} (which in fact studied the same problem under i.i.d. outputs) to a more practical setting. We further identified classes of LM distributions and reward functions for which the two aligned distributions remain close in a finite, non-asymptotic regime, making progress on another question posed in~\cite{asymptotics_of_LM_alignment}. Future directions include analyzing asymptotic closeness under mixing assumptions and analyzing non-asymptotic closeness for arbitrary discrete distributions.

\section{AI Use Statement}

OpenAI’s GPT 5.5 model was used to generate the code for the \texttt{Python Matplotlib} plots in Fig.~\ref{fig:KL_separation_vs_d} and in cleaning up the proofs of Lemma~\ref{lem:T_m_convergence}, Lemma~\ref{lem:dKL_discrete_Q}, and Theorem~\ref{thm:discrete_KL_separation}. GPT 5.6 used to proof-read the paper and for grammatical and stylistic changes. The authors assume full responsibility for the contents of the paper.

\clearpage

\bibliographystyle{ieeetr}
\bibliography{refs}

\clearpage

\onecolumn
\appendices

\section{Proof of~Theorem~\ref{thm:q_simplified_expression}}
\label{appendix:q_expression}

\begin{proof}[Proof of Theorem~\ref{thm:q_simplified_expression}] First, consider the denominator of the expression from Theorem~\ref{thm:optimization_problem_q_solution}. Since the summation is over all sequences $z^{m + 1} \in \YY^{m+1}$, it is possible to write 
\begin{equation}
    \sum_{z^{m + 1} \in \YY^{m+1}} \pi_{z_0} \prod_{i=1}^m {(G_\beta)}_{z_{t - 1} z_t} \on_{z_t} = \pi^\top (G_\beta)^m \on_{| \YY |}. \label{eqn:numerator_expression}
\end{equation}
Next, consider the numerator and use $h_m(y_m) = 1$ as below:
\begin{align}
    \pi_{y_0} \prod_{t=1}^m {(G_\beta)}_{y_{t - 1} y_t} &= \pi_{y_0} \frac{(h_0)_{y_0}}{(h_0)_{y_0}} \prod_{t=1}^m {(G_\beta)}_{y_{t - 1} y_t} \frac{(h_t)_{y_t}}{(h_t)_{y_t}} \\
    &= \pi_{y_0} (h_0)_{y_0} \prod_{t=1}^m {(G_\beta)}_{y_{t - 1} y_t} \frac{(h_t)_{y_t}}{(h_{t - 1})_{y_{t - 1}}}\\
    &= \pi_{y_0} (h_0)_{y_0} \prod_{t=1}^m p^{(t)}_{y_{t - 1} y_t}.
\end{align}

Next, we verify that $\nu$ is a valid probability distribution. Clearly, non-negativity holds.
\begin{equation}
    \sum_{i \in \YY}\nu_i = \frac{\sum_{i \in \YY} \pi_i {h_i^{(0)}}}{\pi^\top (G_\beta)^m \on_{| \YY |}} = \frac{\sum_{i \in \YY} \pi_i {(G_{\beta}^m \on_{|\YY|})_i}}{\pi^\top (G_\beta)^m \on_{| \YY |}} = 1.
\end{equation}
Finally, we verify that the $p^{(t)}$'s are valid stochastic matrices, where we use the definition from~\eqref{eqn:h_t_definition}:
\begin{equation}
    \sum_{b \in \mathcal{Y}} p^{(t)}(a,b) = \sum_{b \in \mathcal{Y}} {(G_{\beta})}_{ab} \frac{h^{(t)}_b}{h^{(t - 1)}_a} = 1   
    \quad \forall 1 \le t \in m, \forall a \in \mathcal{Y}.
\end{equation}   
\end{proof}

\section{Proof of Theorem~\ref{thm:p_expression}}
\label{appendix:p_expression}

\begin{proof} We solve this problem by writing a Lagrangian for suitable constants $\beta, c_a, v_b, \eta$ for $a, b \in \YY$, as follows:
\begin{multline}
    \mathcal{L} = \sum_{(a, b) \in \YY \times \YY} \mu_a p_{a b} r_{a b} - \frac{1}{\beta} (\dKL(p, P) - \delta) + \sum_{a \in \YY} \mu_a c_a \left(\sum_{b \in \YY} p_{ab}-1\right) \\ + \sum_{b \in \YY} v_b \left(\sum_{a \in \YY} \mu_a p_{ab}-\mu_b\right) + \eta\left(\sum_a\mu_a-1\right).
\end{multline}
Taking gradients with respect to $p_{ab}$ first, we obtain
\begin{equation}
    \mu_a r_{ab} - \frac{1}{\beta} \mu_{a} \left(1 + \log\left( \frac{p_{ab}}{P_{ab}}\right)\right) + \mu_a c_a + v_b \mu_a = 0,
\end{equation}
thereby yielding
\begin{equation}
    p_{ab} = C_{a} D_b P_{ab} \exp(\beta r_{ab}) = C_{a} D_b (G_{\beta})_{ab}, \quad a, b \in \YY, 
\end{equation}
for suitable constants $C_{a}, D_{b} \ge 0$.

Recall from Assumption~\ref{assump:Markovness} that $P$ is aperiodic and irreducible. Using the definition ${(G_\beta)}_{ab} = P_{ab} \exp(\beta r_{ab})$, it follows that $G_{\beta}$ is primitive. From standard arguments (see, for instance~\cite[Page 674, Equation~8.3.10]{meyer_linear_algebra}), it also follows that $G_{\beta}$ will have a unique Perron eigenvalue $\lambda_{\beta}$, and a unique right Perron eigenvector (up to scaling) $h \in \RR^{|\YY|}$. Now, take $D_b = h_b$ and $C_a = \frac{1}{\lambda_{\beta} h_a}$ as in the theorem statement, where $h$ is defined as in the theorem statement. Such a $h$ would exist since $G$ is a non-negative matrix. 

It is left to verify all the constraints. First, we verify the row-stochasticity constraint:
\begin{equation}
    \sum_{b \in \YY} p_{ab} = \frac{(G_{\beta} h)_a}{\lambda_\beta h_a} = \frac{\lambda_\beta h_a}{\lambda_\beta h_a} = 1.
\end{equation}
The value of $\mu$ is chosen such that it is the left eigenvector of $p$, with the eigenvalue $1$, and such that $\sum_{a \in \YY} \mu_a = 1$ is satisfied. This choice of $\mu$ ensures that both these constraints are trivially satisfied. Finally, the value of $\beta$ is chosen such that $\dKL(p, P) = \delta$.

Next, we argue that $\mu$ is unique. Let $D_h = \diag(h_1, h_2, \dots, h_{|\YY|})$. From this, note that
\begin{equation}
    p = D_h^{-1} \frac{G_{\beta}}{\lambda_{\beta}} D_h.
\end{equation}
Thus, $p$ is similar to $\frac{G_{\beta}}{\lambda_{\beta}}$. Since $\frac{G_{\beta}}{\lambda_{\beta}}$ itself has a unique left Perron vector (up to scaling) corresponding to the eigenvalue $1$, $p$, via similarity with $\frac{G_{\beta}}{\lambda_{\beta}}$, also has an unique left Perron eigenvector (say $\mu$) corresponding to the eigenvalue $1$. $\mu$ will hence be that (unique) positive vector satisfying $\mu^T p = \mu^T$, whose elements  sum up to $1$. \end{proof}

\section{Proof of Theorem~\ref{thm:s_delta_value}}
\label{appendix:s_delta_value}

\begin{proof} Recall that $s_{\delta} = \sum_{(i, j) \in \YY \times \YY} \mu_i p_{i j} r_{i j}$, where $\mu$ and $p$ are given by Theorem~\ref{thm:p_expression}. Although $h$ (see Theorem~\ref{thm:p_expression}) is a function of $\beta$, we omit an explicit dependence on $\beta$ in the notation, for brevity. Let $g$ be the left eigenvector of $G_{\beta}$ corresponding to the eigenvalue $\lambda_{\beta}$, i.e. $g^T G_{\beta} = \lambda_{\beta} g^T$ and let its magnitude be such that $g^T h = 1$. 

First, we relate $\mu$ with $g$ and $h$. Take the quantity 
\begin{equation} 
    \sum_{i \in \YY} g_i h_i p_{ij} = \sum_{i \in \YY} g_i h_i \frac{(G_{\beta})_{ij} h_j}{\lambda_{\beta} h_i}
    = \frac{h_j}{\lambda_{\beta}} \sum_{i \in \YY} g_i (G_{\beta})_{ij} = g_j h_j.
\end{equation}
Also recall that $\sum_{j \in \YY} g_j h_j = 1$. Recall from Theorem~\ref{thm:p_expression} that $p$ has a unique left eigenvector $\mu$ corresponding to the eigenvalue $1$ whose elements also sum up to $1$. From the above discussion, we see that the vector $g_j h_j$ is the unique left eigenvector (with eigenvalue $1$) for $p$ whose elements sum up to $1$, i.e., we must have $\mu_j = g_j h_j$.

Now, differentiating the expression $G_{\beta} h = \lambda_{\beta} h$ with respect to $\beta$ on both sides, we get:
\begin{equation}
    G_{\beta} h' + G_{\beta}'h = \lambda_{\beta}' h + \lambda_{\beta} h',
\end{equation}
where $'$ denotes the derivative with respect to $\beta$. By left-multiplying both sides with $g^T$, we get
\begin{align}
    g^T G_{\beta} h' + g^T G_{\beta}'h &= \lambda_{\beta}' g^T h + \lambda_{\beta} g^T h'. 
\end{align}
Observing that  $g^T G_{\beta} = \lambda_\beta g^T$, we obtain
\begin{align}
    g^T G_{\beta}'h &= \lambda_{\beta}'.
\end{align}
Note that $(G_{\beta})'_{ij} = (G_{\beta})_{ij} r_{ij}$ from the definition in Theorem~\ref{thm:q_expression}. From this, we have
\begin{align}
    \lambda_{\beta}' &= \sum_{i, j \in \YY} g_i (G_{\beta})_{ij} r_{ij} h_j,
    \end{align}
and hence that
\begin{align}
    \Lambda_{\beta}' &= \frac{\lambda_{\beta}'}{\lambda_{\beta}} = \sum_{i, j \in \YY} \frac{g_i (G_{\beta})_{ij} r_{ij} h_j}{\lambda_{\beta}}.
\end{align}
Finally, we use the definition of $p_{ij}$ and $\mu_i = g_i h_i$ to get
\begin{equation}
    \Lambda_{\beta}' = \sum_{i, j \in \YY} g_i h_i r_{ij} p_{ij} = \sum_{i, j \in \YY} \mu_i r_{ij} p_{ij} = s_{\delta}.
\end{equation}
\end{proof}

\section{Proof of Lemma~\ref{lem:G_beta_perron_frobenius}}
\label{appendix:G_beta_perron_frobenius}

\begin{proof}
Recall the earlier argument from the proof of Theorem~\ref{thm:p_expression} that the Perron-Frobenius Theorem can be applied on the matrix $G_{\beta}$ and that it has a unique Perron eigenvalue $\lambda_{\beta}$, a unique left Perron eigenvector $g$, and a unique right Perron eigenvector $h$. From the Perron Frobenius theorem, if $g$ and $h$ are normalized such that $g^T h = 1$, we have
    \begin{equation}
        \lim_{n \to \infty} \frac{G_{\beta}^n}{\lambda_{\beta}^n} = h g^T.
    \end{equation}
    From this,
    \begin{equation}
        \frac{G_{\beta}^n}{\lambda_{\beta}^n} = h g^T + o(1).
    \end{equation}
Multiplying by $\lambda_{\beta}^n$ completes the proof.\end{proof}

\section{Proof of Theorem~\ref{thm:p_relation}}
\label{appendix:p_relation}

\begin{proof} 
Using Lemma~\ref{lem:G_beta_perron_frobenius}, we obtain
\begin{equation}
    h^{(t)} = G_{\beta}^{m - t} \on_{|\YY|} = \lambda_\beta^{m - t} (g^T \on_{|\YY|}) h + o(\lambda_\beta^{m - t})
\end{equation}
and
\begin{align}
    G_{\beta} h^{(t)} &= \lambda_\beta^{m - t} (g^T \on_{|\YY|}) G_\beta h + o(\lambda_\beta^{m - t + 1}) \\
    &= \lambda_\beta^{m - t + 1} (g^T \on_{|\YY|}) h + o(\lambda_\beta^{m - t + 1}).
\end{align}

From the above two equations, we get that for $a,b\in \YY$,
\begin{align}
    \lim_{m \to \infty} p^{(t)}_{a,b} &= \lim_{m \to \infty} {(G_{\beta})}_{ab} \frac{h^{(t)}_b}{h^{(t - 1)}_a} \\
    &= {(G_{\beta})}_{ab} \lim_{m \to \infty}  \frac{h^{(t)}_b}{(G_\beta h^{(t)})_a} \\
    &= {(G_{\beta})}_{ab} \lim_{m \to \infty} \frac{\lambda_\beta^{m - t} (g^T \on_{|\YY|}) h_b + o(\lambda_\beta^{m - t})}{\lambda_\beta^{m - t + 1} (g^T \on_{|\YY|}) h_a + o(\lambda_\beta^{m - t + 1})} \\
    &= {(G_{\beta})}_{ab} \frac{h_b}{\lambda_{\beta} h_a} = p_{ab}.
\end{align}

\end{proof}

\section{Proof of Theorem~\ref{thm:rate_kl_value}}
\label{appendix:rate_kl_value}

\begin{proof} Note that
\begin{equation}
    S_m(Y^{m + 1}) = \frac{1}{m}\cdot r(Y^{m+1}) = \sum_{a,b \in \YY} \theta_{ab} r_{ab}.
    \label{eqn:S_m_continuous_defn}
\end{equation}
From this, note that $S_m$ is a continuous function of $\{\theta_{ab}\}$. Let $S_m = \phi(\theta)$ for some continuous function $\phi$. Using the contraction principle~\citep[Theorem 4.2.1]{LDP_contraction_principle} and noting that $J(s)$ is a good rate function, $S_m$ also satisfies a LDP with rate
\begin{equation}
    \tilde{J} (s) = \min \{J(\theta): \phi(\theta) = s\}.
\end{equation}
In our case, using Theorem~\ref{thm:markov_ldp}, we obtain the rate in the theorem statement. Proving that $\tilde{J} (s)$ is a {good} rate function also follows from the contraction principle, since the rate function of a random variable obtained by a continuous map of a random variable with a good rate function is itself good (see, for instance, \cite[Theorem 4.2.1]{LDP_contraction_principle}). \end{proof}

\section{Proof of Lemma~\ref{lem:s_t_equality}}

\label{appendix:s_t_equality}
\begin{proof} First, note that $t_\delta$ can be written alternatively as below:
\begin{equation}
    t_{\delta} = \max \Bigg\{
        s: \min_{\substack{\mu \in \Delta_{|\YY|},\\p \in \Delta_{|\YY| \times |\YY|}}} \sum_{i,j \in \YY} \mu_i p_{ij} \log\left(\frac{p_{ij}}{P_{ij}}\right) \leq \delta
        \text{ subject to }\mu^T p = \mu^T, \sum_{i,j \in \YY} \mu_i p_{ij} r_{ij} = s
        \Bigg\}.
\end{equation}

In Lemma~\ref{thm:rate_kl_value}, $\theta_i$ and $b$ are `dummy' variables and we replace them respectively by $\mu$ and $p$:
\begin{equation}
    \tilde{J} (s) = \min_{\mu \in \Delta_{|\YY|}, p \in \Delta_{|\YY| \times |\YY|}} \left\{\sum_{i, j \in \YY} \mu_i p_{ij} \log \left(\frac{p_{ij}}{P_{ij}} \right) : \sum_{i, j \in \YY} r_{ij} \mu_i p_{ij} = s, \quad \sum_{i \in \YY} \mu_i p_{ij} = \mu_j \right\}.
\end{equation}

We show the result by proving $s_\delta \le t_\delta$ and $t_\delta \le s_\delta$. First, let
\((\mu^\star,p^\star)\) be an optimal solution to the optimization problem
in~\eqref{eqn:optimization_problem_with_flow}. Then,
\begin{equation}
    s_\delta = \sum_{i,j \in \YY} \mu_i^\star p_{ij}^\star r_{ij},
\end{equation}
and
\begin{equation}
    \sum_{i,j \in \YY} \mu_i^\star p_{ij}^\star \log\left(\frac{p_{ij}^\star}{P_{ij}}\right) \leq \delta.
\end{equation}
Since \((\mu^\star,p^\star)\) is feasible for the minimization defining
\(\tilde{J}(s_\delta)\), we have
\begin{equation}
    \tilde{J}(s_\delta) \leq \sum_{i,j \in \YY} \mu_i^\star p_{ij}^\star \log\left(\frac{p_{ij}^\star}{P_{ij}}\right) \leq \delta.
\end{equation}
Therefore, \(s_\delta\) is feasible in the definition of \(t_\delta\), and hence $s_\delta \leq t_\delta$.
    
Conversely, let \(s\) be such that
\begin{equation}
    \tilde{J}(s) \leq \delta.
\end{equation}
By the definition of \(\tilde{J}(s)\), there exists a feasible pair
\((\mu^{(s)},p^{(s)})\) satisfying
\begin{equation}
    (\mu^{(s)})^T p^{(s)} = (\mu^{(s)})^T,
\end{equation}
\begin{equation}
    \sum_{i,j \in \YY} \mu_i^{(s)}p_{ij}^{(s)}r_{ij} = s,
\end{equation}
and
\begin{equation}
    \sum_{i,j \in \YY} \mu_i^{(s)}p_{ij}^{(s)} \log\left(\frac{p_{ij}^{(s)}}{P_{ij}}\right) \leq \delta.
\end{equation}
Thus, \((\mu^{(s)},p^{(s)})\) is feasible for the optimization problem
in~\eqref{eqn:optimization_problem_with_flow}. By the definition of
\(s_\delta\), it follows that $s \leq s_\delta$. Since this holds for every \(s\) satisfying \(\tilde{J}(s) \leq \delta\),
we obtain $t_\delta \leq s_\delta$.

Combining the two inequalities gives the desired result.
\end{proof}

\section{Proof of Lemma~\ref{lem:T_m_convergence}}
\label{appendix:T_m_convergence}

\begin{proof}
First, we consider the probability $\PP(T_{m, \exp(m \delta)} \ge s_{\delta} + \epsilon)$ for some $\epsilon > 0$. Note from~\eqref{eqn:T_definition}, using the union bound, and using the LDP for $S_m$ from Theorem~\ref{thm:rate_kl_value} that
\begin{multline}
    \PP(T_{m, \exp(m \delta)} \ge s_{\delta} + \epsilon) \le n \PP(S_m \ge s_{\delta} + \epsilon) \\ \le n \exp(-m \inf_{s \ge s_{\delta} + \epsilon} \tilde{J} (s) + o(m)) = \exp(m (\delta - \inf_{s \ge s_{\delta} + \epsilon} \tilde{J} (s)) + o(m)).
\end{multline}
Recall from Lemma~\ref{lem:s_t_equality} that $s_{\delta} = \sup \{s: \tilde{J} (s) \le \delta\}$. From this, note that for $s > s_{\delta}$, we necessarily have $\tilde{J}(s) > \delta$. Thus, for $s \ge s_{\delta} + \epsilon > s_{\delta}$, we have $\tilde{J}(s) > \delta$. Thus,
\begin{equation}
    \delta - \inf_{s \ge s_{\delta} + \epsilon} \tilde{J} (s) < 0
\end{equation}
and
$\lim_{m \to \infty} \PP(T_{m, \exp(m \delta)} \ge s_{\delta} + \epsilon) = 0$.

Next, consider $\PP(T_{m, \exp(m \delta)} \le s_{\delta} - \epsilon)$. Note that
\begin{equation}
    \PP(T_{m, \exp(m \delta)} \le s_{\delta} - \epsilon) = (1 - \PP(S_m > s_{\delta} - \epsilon))^n \le \exp(- n \PP(S_m > s_{\delta} - \epsilon)).
    \label{eqn:temp_appendix}
\end{equation}

We next argue that $\tilde{J}(s)$ is a convex function of $s$. To see this, first note that $J(\theta)$ itself is convex since the Markovian KL divergence is convex in the first argument. Next, recall that the map from $\theta$ to $S_m$ is a linear map as in~\eqref{eqn:S_m_continuous_defn}. Now, consider the function
\begin{equation}
    L(s, \theta) = \begin{cases}
    J(\theta), &\phi(\theta) = s, \\
    \infty, &\text{ otherwise.}
\end{cases}.
\end{equation}
Note that $L(s, \theta)$ is convex since $J(\theta)$ is convex. Finally, noting that $\tilde{J}(s) = \min_\theta L(s, \theta)$ gives the desired result that $\tilde{J}(s)$ is convex.

Recall that $\tilde{J}(s_\delta) \le \delta$.
\begin{enumerate}
    \item If $\tilde{J}(s_\delta) < \delta$, note that $\inf_{s > s_{\delta} - \epsilon} \tilde{J}(s) \le \tilde{J}(s_\delta) = \delta - \gamma$ for some $\gamma > 0$.
    \item The other case has $\tilde{J}(s_\delta) = \delta$.
    Since $\tilde{J}(s)$ is a rate function, we know that at the mean value $s_0 = \EE[S_m]$, we have $\tilde{J}(s_0) = 0$. Now, there are two possibilities:
\begin{enumerate}
    \item If $s_0 > s_\delta - \epsilon$, we have $\tilde{J}(s_0) = \delta - \gamma$ where $\gamma = \delta > 0$. Thus, we have $\inf_{s > s_{\delta} - \epsilon} \tilde{J}(s) \le \tilde{J}(s_0) = \delta - \gamma$.
    \item If $s_0 \le s_\delta - \epsilon$, suppose $s_0 = s_\delta - \beta$ for some $\beta \ge \epsilon$. Now, consider an $s_\delta - \Delta \in (s_\delta - \epsilon, s_\delta)$, where $\Delta < \epsilon$. From the convexity of $\tilde{J}$ and since $0 < \frac{\Delta}{\beta}< 1$, we know that
    \begin{equation}
        \tilde{J}(s_\delta - \Delta) = \tilde{J} \left( \left(1 - \frac{\Delta}{\beta} \right)s_\delta + \frac{\Delta}{\beta}(s_0) \right) \le \left(1 - \frac{\Delta}{\beta} \right) \delta + \frac{\Delta}{\beta} \tilde{J}(s_0) < \delta.
    \end{equation}
    Suppose that $\tilde{J}(s_\delta - \Delta) = \delta - \gamma$ for some $\gamma > 0$. Thus, we have $\inf_{s > s_{\delta} - \epsilon} \tilde{J}(s) \le \tilde{J}(s_\delta - \Delta) = \delta - \gamma$.
\end{enumerate}
\end{enumerate}

In all three of the cases, we have $\inf_{s > s_{\delta} - \epsilon} \tilde{J}(s) \le \delta - \gamma$ for some $\gamma > 0$ and thus
\begin{align}
\PP(S_m > s_{\delta} - \epsilon) &\ge \exp \left(-m \inf_{s > s_{\delta} - \epsilon} \tilde{J}(s) + o(m) \right) \nonumber \\
&\ge \exp(-m (\delta - \gamma) + o(m)).
\end{align}
From \eqref{eqn:temp_appendix}, we hence have
\begin{equation}
    \PP(T_{m, \exp(m \delta)} \le s_{\delta} - \epsilon) \le \exp(- \exp(m \delta) \exp(- m (\delta - \gamma) + o(m))) \le \exp(-\exp( m \gamma + o(m))).
\end{equation}
Taking the limit as $m\to \infty$ on both sides above gives the desired result.

\end{proof}

\section{Proof of Lemma~\ref{lem:delta_lambda_relation}}
\label{appendix:delta_lambda_relation}

\begin{proof}
Recall from Theorem~\ref{thm:p_expression} that
\begin{equation}
p_{ij} = \frac{G_{ij}h_j}{\lambda h_i} = \frac{P_{ij}\exp(\beta R_{ij})h_j}{\lambda h_i}.   
\end{equation}
From this,
\begin{equation}
\log\left(\frac{p_{ij}}{P_{ij}}\right) = \beta R_{ij} + \log h_j - \log h_i -
\log\lambda.    
\end{equation}
Recall that $\mu$ is the stationary distribution for $p$. Hence,
\begin{align}
\dKL(p,P) &= \sum_{i,j \in \YY} \mu_i p_{ij} \log\left(\frac{p_{ij}}{P_{ij}}\right) \\
&= \sum_{i,j\in \YY} \mu_i p_{ij} \left(\beta R_{ij} + \log h_j - \log h_i - \log\lambda \right) \\
&= \beta\sum_{i,j \in \YY }\mu_i p_{ij}R_{ij} + \sum_{i,j \in \YY}\mu_i p_{ij}\log h_j - \sum_{i,j \in \YY}\mu_i p_{ij}\log h_i - \log\lambda. \label{eq:temp2}
\end{align}
Since $\mu$ is stationary for $p$,
\begin{equation}
\sum_{i \in \YY} \mu_i p_{ij}=\mu_j \forall j \in \YY.
\end{equation}
Thus, since
\begin{equation}
\sum_{i,j \in \YY}\mu_i p_{ij}\log h_j = \sum_{j \in \YY}\mu_j\log h_j; \qquad \sum_{i,j \in \YY} \mu_i p_{ij}\log h_i = \sum_{i \in \YY} \mu_i\log h_i,
\end{equation}
and the second and third terms in \eqref{eq:temp2} cancel. Putting everything together, we obtain
\begin{align}
\dKL(p,P) &= \beta\sum_{i,j \in \YY}\mu_i p_{ij}R_{ij} - \log\lambda \\
&= \beta s_\delta - \Lambda.
\end{align}
Since $s_\delta=\Lambda'$ and since the value of $\beta$ is chosen so that $\dKL(p,P)=\delta$ as in~\eqref{eqn:beta_value_choice}, the desired result follows. \end{proof}

\section{Proof of Lemma~\ref{lem:borel_change}}
\label{appendix:borel_change}
\begin{proof} Let $g(l)=\frac{e^{\beta l}}{A_\beta}$,
so that
\begin{equation}
\frac{dq_\beta}{dQ}(y)=g(r(y)).    
\end{equation}

Let $B \in \BB (\mathcal Y)$, and let $C \in \BB(\RR)$ be a Borel subset of $\RR$. Then
\begin{equation}
q_\beta(Y'\in B, L'\in C) = \EE_{q_\beta} [\on_{Y' \in B} \on_{L' \in C}] = \EE_Q [\on_{Y \in B} \on_{L \in C} g(L)].    
\end{equation}
Next, use the tower law of expectations, conditioning on $L$:
\begin{equation}
q_\beta(Y'\in B, L'\in C) = \EE_Q [\EE_Q[\on_{Y \in B} \on_{L \in C} g(L) | L]] = \EE_Q [\on_{L \in C} g(L) \EE_Q[\on_{Y \in B} | L]] = \EE_Q [\on_{L \in C} g(L) Q(Y \in B | L)]. \label{eqn:closeness_continuous_proof_1}
\end{equation}
Now, take $B = \YY$ to obtain
\begin{equation}
    p_{L'}(C) = \EE_Q [\on_{L \in C} g(L)],
\end{equation}
from which we obtain
\begin{equation}
    \frac{dp_{L'}}{dp_{L}}(l) = \frac{e^{\beta l}}{A_{\beta}} = g(l).
\end{equation}

Finally, note that
\begin{align}
    q_\beta (Y' \in B | L' = l) &= \frac{q_\beta (Y' \in B, L' = l)}{q_\beta (L' = l)} \\
    &= \frac{\EE_Q[\on_{Y \in B, L = l} g(l)]}{\EE_Q[\on_{L = l} g(l)]} \\
    &= \frac{\EE_Q[\on_{Y \in B, L = l}]}{\EE_Q[\on_{L = l}]} \\
    &= \frac{Q(Y \in B, L = l)}{Q(L = l)} \\
    &= Q(Y \in B | L = l).
\end{align}
\end{proof}

\section{Proof of Lemma~\ref{lem:rn_best_of_n}}
\label{appendix:rn_best_of_n}
\begin{proof} Let $B\in\mathcal B(\mathcal Y)$ be a Borel set. By symmetry of the $n$ samples,
\begin{align}
\qbestn(B) &= \sum_{i=1}^n \PP \left(Y_i\in B,\ r(Y_i)>\max_{j\ne i}r(Y_j) \right)\\
&= n\PP\left(Y_1\in B, r(Y_j)<r(Y_1),\  \forall j=2,\ldots,n \right).
\end{align}
Conditioning on $Y_1$, using independence, and using~\eqref{eqn:F_definition},
\begin{align}
\qbestn(B) &= n \EE_Q\left[\on_{\{Y\in B\}} \PP\left(r(Y_j)<r(Y),\ \forall j=2,\ldots,n | Y \right) \right]\\
&= n \EE_Q \left[\on_{\{Y\in B\}} P_L(r(Y))^{n-1} \right].
\end{align}
Since this holds for every Borel set $B\in\mathcal B(\mathcal Y)$, the desired result follows.
\end{proof}

\section{Proof of Theorem~\ref{thm:continuous_KL_separation}}
\label{appendix:continuous_KL_separation}
\begin{proof} Assume that $\qbestn = q_\beta$. Since both distributions are absolutely continuous with respect to $Q$, their Radon--Nikodym derivatives with respect to $Q$ must agree almost surely. Hence,
by~\eqref{eqn:rn_tilted} and Lemma~\eqref{lem:rn_best_of_n},
\begin{equation}
n P_L(r(y))^{n-1} = \frac{\exp(\beta r(y))}{A_\beta}
\end{equation}
almost surely. Equivalently,
\begin{equation}
n P_L(l)^{n-1} = \frac{\exp(\beta l)}{A_\beta}
\end{equation}
almost surely. From this,
\begin{equation}
P_L(l) = \exp\left( \frac{\beta(l-b)}{n-1} \right)
\end{equation}
almost surely, where $b=\frac{1}{\beta}\log(nA_\beta) > 0$.

Since $\exp\left( \frac{\beta(l-b)}{n-1} \right)$ is a continuous function of $l$ and it is the CDF of the random variable $L$, it must be that $\LL$, the support of $L$, is also continuous. Further, since $P_L$ is a CDF, it must have a value of $1$ for $b \ge l$. From this, we get $\LL = (-\infty,b]$. This proves the forward direction of the theorem.

Conversely, suppose that $L$ has the CDF as in the theorem statement. Its density is
\begin{equation}
p_L(l) = \lambda\exp(\lambda(l-b))
\mathbf 1_{\{l\le b\}},
\end{equation}
where $\lambda= \frac{\beta}{n - 1}$. Then,
\begin{equation}
A_\beta = \EE_Q[\exp(\beta L)] = \frac{\lambda}{\lambda+\beta}\exp(\beta b) = \frac{\exp(\beta b)}{n}.
\end{equation}
Therefore,
\begin{equation}
\frac{dq_\beta}{dQ}(y) = \frac{\exp(\beta r(y))}{A_\beta} = n\exp(\beta(r(y)-b)) = nP_L(r(y))^{n-1} = \frac{d \qbestn}{dQ}(y)
\end{equation}
almost surely. Hence, the converse also follows.

It remains to compute the KL divergence between the tilted distribution $q_\beta$ and the distribution $Q$ to ensure that the condition that $\dKL(q_\beta, Q) \le m \delta = \delta$ is satisfied. First, note from~\eqref{eqn:rn_tilted} and this theorem statement that $L'$ also follows a reverse exponential distribution. Namely, for a suitable constant $C > 0$,
\begin{align}
    p_{L'}(l) &= C \exp \left(\frac{\beta}{n - 1}(l-b) \right) \exp(\beta l), \qquad l\le b \\
    &= \exp \left(\frac{\beta n}{n - 1}(l-b) \right), \, \qquad \qquad \qquad l\le b.
\end{align}
Clearly, $L'$ also follows a reverse exponential distribution with parameter $\frac{\beta n}{n - 1}$.

Next, note from Corollary~\ref{corollary:KL_divergence} and using the expression for KL-divergence between two exponential distributions with parameters $\frac{\beta n}{n - 1}$ and $\frac{\beta}{n - 1}$ (see, for instance, \cite[Table 1]{kl_exponential_distributions}) that
\begin{equation}
    \dKL(q_\beta, Q) = \dKL(p_{L'}, p_L) = \log \left(\frac{n - 1}{\beta} \cdot \frac{\beta n}{n - 1} \right) + \frac 1n - 1 = \log \left(n \right) + \frac {n - 1}n.
\end{equation}

Recalling that the value of $\beta$ is chosen such that $\dKL(q_\beta, Q) = \delta$ from~\eqref{eqn:beta_value_choice} gives the desired result.
\end{proof}

\section{Proof of Lemma~\ref{lem:dKL_discrete_Q}}
\label{appendix:dKL_discrete_Q}
First, we will show that the vector of probabilities $q_\beta$ is majorized by $\qbestn$. Recall that $q_\beta(y_i) = \frac{i^{n-1}}{S_{d,n-1}}$ and $\qbestn(y_i) = \frac{i^n - (i-1)^n}{d^n} \forall i \in [d]$.
Thus,
\begin{equation}
    r_i = \frac{\qbestn(y_i)}{q_\beta(y_i)} = \frac{S_{d, n - 1}}{d^n}\cdot \frac{i^n - (i-1)^n}{i^{n-1}} = \frac{S_{d, n - 1}i}{d^n} \cdot \left( 1 - \left(1- \frac 1i \right)^n \right)
\end{equation}
It can be seen that this is an increasing sequence with $i \in [d]$ (by, for instance, differentiating with respect to $i$). Also recall that $\sum_{i \in [d]} \qbestn(y_i) = \sum_{i \in [d]} q_\beta(y_i) = 1$. Thus, there exists an index $i_0$ such that 
\begin{equation}
    \qbestn(y_i) \le q_\beta(y_i),\ \forall i \le i_0 \qquad \text{and}\qquad    \qbestn(y_i) > q_\beta(y_i),\ \forall i > i_0.
\end{equation}
This means that the prefix sums (sum of $k$ largest elements) satisfy $\sum_{i = 1}^k \qbestn(y_{d - i}) \ge \sum_{i = 1}^k q_\beta(y_{d - i}),\ \forall k \in [d]$, with the equality holding only when $k = d$. From this, $\qbestn$ majorizes $q_\beta$. By using Karamata's inequality (see, for instance,~\citet[Theorem 1]{karamata_inequality}) and noting that $\dKL$ is convex in the first argument, we obtain
\begin{equation}
    \dKL(q_\beta, Q) \le \dKL(\qbestn{}, Q). \label{eqn:proof_discrete_1.1}
\end{equation}

Now, we attempt to upper bound $\dKL(\qbestn{}, Q)$. Now, consider the function: $f(x) = \frac{\lceil dx \rceil}{d}$, which is essentially a function that takes a continuous distribution and discretizes it. Consider the two (continuous) distributions: $s(x) = n x^{n-1},\ 0 < x \le 1$ and $t(x) = 1,\ 0 < x \le 1$. Note that if $X \sim s$, then
\begin{equation}
    \PP \left(f(X) = \frac {i}{d} \right) = \PP \left(\frac{i - 1}{d} < X \le \frac{i}{d} \right) = \PP \left(X \le \frac{i}{d} \right) - \PP \left(X \le \frac{i - 1}{d} \right) = \frac{i^n - (i-1)^n}{d} = \qbestn{} (y_i)
\end{equation}
and thus $f(X) \sim \qbestn$. Similarly, if $Y \sim t$, then $f(Y) \sim Q$. Finally, we use the data processing inequality for KL--divergence \cite[Corollary 2.18]{polyanskiy_book} using the function `$f$' to obtain
\begin{equation}
    \dKL(\qbestn{}, Q) \le \dKL(s, t) = \int_{0^+}^1 n x^{n-1} \log(n x^{n-1}) dx.
    \label{eqn:proof_discrete_1.2}
\end{equation}
It is possible to solve this integration using integration by parts to obtain
\begin{equation}
    \dKL(s, t) = \log (n) - \frac{n-1}{n}.
    \label{eqn:proof_discrete_1.3}
\end{equation}

Combining equations~\eqref{eqn:proof_discrete_1.1},~\eqref{eqn:proof_discrete_1.2}, and~\eqref{eqn:proof_discrete_1.3} yields the desired result.

\section{Proof of Theorem~\ref{thm:discrete_KL_separation}}
\label{appendix:discrete_KL_separation}

\begin{proof} Recall that we need to calculate the below quantity for $n \ge 4$:
\begin{equation}
\dKL(\qbestn, q_\beta) = \sum_{i \in [d]} \frac{i^n-(i-1)^n}{d^n} \log \left(\frac{i^n-(i-1)^n}{d^n} \cdot \frac{S_{d,n-1}}{i^{n-1}} \right),
\end{equation}
where $S_{d,n-1}=\sum_{j=1}^d j^{n-1}$.

First, note from the binomial expansion that
\begin{align}
i^n-(i-1)^n &= n i^{n-1} - \frac{n (n-1)}{2} i^{n-2} + O(i^{n-3}), \\
\frac{i^n-(i-1)^n}{i^{n-1}} &= n - \frac{n (n-1)}{2 i} + O \left(\frac 1{i^{2}} \right).
\end{align}
Also note from Faulhaber's formula \citep{faulhaber} that
\begin{align}
    S_{d, n - 1} &= \frac{d^n}{n} + \frac 12 d^{n-1} + O(d^{n - 2}), \\
    \frac{S_{d, n - 1}}{d^{n}} &= \frac{1}{n} + \frac 1{2d} + O \left(\frac{1}{d^2} \right).
\end{align}
Thus, 
\begin{align}
    \left(\frac{i^n-(i-1)^n}{d^n} \cdot \frac{S_{d,n-1}}{i^{n-1}} \right) &= \left(n - \frac{n (n-1)}{2 i} + O \left(\frac 1{i^{2}} \right) \right) \cdot \left(\frac{1}{n} + \frac 1{2d} + O \left(\frac{1}{d^2} \right) \right), \\
    &= 1 + \underbrace{\frac{n}{2d} - \frac{n-1}{2i} + O \left(\frac{1}{i^2}\right)}_{\delta_i} \label{eqn:discrete_proof_1}.
\end{align}
Let 
\begin{align}
    \delta_i = \frac{\qbestn(y_i)}{q_\beta(y_i)} - 1 = \frac{n}{2d} - \frac{n-1}{2i} + O \left(\frac{1}{i^2} \right). \label{eqn:discrete_proof_3}
\end{align}

Recall that $\qbestn$ and $q_\beta$ are both distributions, and thus satisfy
\begin{equation}
    \sum_{i \in [d]} q_\beta (y_i) \delta_i = 0. \label{eqn:discrete_proof_2}
\end{equation}
Thus, we have
\begin{align}
    \dKL(\qbestn, q_\beta) = \sum_{i \in [d]} \qbestn(y_i) \log\left(1 + \delta_i\right) & \overset{(a)}{=} \sum_{i \in [d]} q_\beta(y_i) (1 + \delta_i) \log\left(1 + \delta_i\right) \\
    &\overset{(b)}{=} \sum_{i \in [d]} q_\beta(y_i) \left(\delta_i +\frac{\delta_i^2}{2} + O(\delta_i^3) \right) \\
    &\overset{(c)}{=} \sum_{i \in [d]} q_\beta(y_i) \left(\frac{\delta_i^2}{2} + O(\delta_i^3) \right).
\end{align}
where $(a)$ uses $\qbestn(y_i) = q_\beta(y_i) (1 + \delta_i) \forall i \in [d]$, $(b)$ uses $\log\left(1 + \delta_i\right) = \delta_i - \frac{\delta_i^2}{2} + O(\delta_i^3)$, and $(c)$ uses~\eqref{eqn:discrete_proof_2}.

Next, note from~\eqref{eqn:discrete_proof_3} that
\begin{equation}
    \delta_i^2 = \frac{n^2}{4d^2} + \frac{(n-1)^2}{4i^2} - \frac{2 n (n-1)}{4 i d} + O \left(\frac{1}{i^3}\right)
\end{equation}
and that
\begin{align}
    \sum_{i \in [d]} O(q_\beta(y_i) \delta_i^3) &= \sum_{i \in [d]} O \left( \frac{i^{n - 1}}{S_{d, n - 1}} \frac{1}{i^3} \right) = \sum_{i \in [d]} O \left(\frac{i^{n - 4}}{S_{d, n - 1}} \right) = O \left(\frac{S_{d, n-4}}{S_{d, n - 1}} \right) = O \left(\frac{1}{d^3} \right),
\end{align}
where the last step uses Faulhaber's formula \citep{faulhaber} again.

Next, we perform some algebraic manipulations and use Faulhaber's formula \citep{faulhaber} for the last time to get the desired result.
\begin{align}
    \dKL(\qbestn, q_\beta) &= \frac 12 \sum_{i \in [d]} \frac{i^{n - 1}}{S_{d, n - 1}} \left(\frac{n^2}{4d^2} + \frac{(n-1)^2}{4i^2} - \frac{n (n-1)}{2 i d}\right) + O \left(\frac{1}{d^3}\right) \\
    &= \frac 12 \sum_{i \in [d]} \frac{i^{n - 1}}{S_{d, n - 1}} \left(\frac{n^2}{4d^2} + \frac{(n-1)^2}{4i^2} - \frac{n (n-1)}{2 i d}\right) + O \left(\frac{1}{d^3}\right) \\
    &= \frac{S_{d, n - 1}}{S_{d, n - 1}} \frac{n^2}{8 d^2} + \frac{(n-1)^2 S_{d, n - 3}}{8 S_{d, n - 1}} - \frac{n (n-1) S_{d, n - 2}}{4 S_{d, n - 1} d} + O \left(\frac{1}{d^3}\right) \\
    &= \frac{n^2}{8d^2} + \frac{(n-1)^2n}{8 d^{2} (n-2)} - \frac{n^2}{4 d^2} + O \left(\frac{1}{d^3}\right) \\
    &= \frac{n}{(n-2) 8d^2} + O \left(\frac{1}{d^3}\right).
\end{align}
    
\end{proof}

\end{document}